\newtheorem{theorem}{Theorem}[section]
\newtheorem{lemma}[theorem]{Lemma}
\newtheorem{proposition}{Proposition}
\theoremstyle{definition}
\newtheorem{remark}{Remark}
\newtheorem{assumption}{Assumption}
\newcommand{\W}{{\bm W}}
\newcommand{\w}{{\bm w}}
\newcommand{\x}{{\bm x}}
\newcommand{\D}{\mathcal{D}}
\newcommand{\A}[0]{~~\text{and}~~}
\newcommand{\argmax}[1]{\mathop{\text{argmax}}_{#1}}
\newcommand{\norm}[1]{\left|#1\right|}
\newcommand{\inner}[2]{\left\langle#1,#2\right\rangle}
\newcommand{\set}[1]{\left\{#1\right\}}
\newcommand{\E}[2]{\mathop{\mathbb{E}}_{#1}\left[#2\right]}
\let\P\undefined
\newcommand{\P}[2]{\mathop{\mathbb{P}}_{#1}\left[#2\right]}
\pgfplotsset{compat=1.15}
\newcommand*\circled[1]{\tikz[baseline=(char.base)]{
            \node[shape=circle,draw,inner sep=2pt] (char) {#1};}}
\title[Running heading with forty characters or less] 
      {Global Convergence and Geometric Characterization of Slow to Fast Weight Evolution in Neural Network Training for Classifying Linearly Non-Separable Data}
\author[Ziang Long, Penghang Yin and Jack Xin]{}
\subjclass{Primary: 90C26, 68W40.}
 \keywords{neural networks, learning dynamics, geometric condition, slow-to-fast convergence, classification, gradient descent}
 \email{zlong6@uci.edu}
 \email{pyin@albany.edu}
 \email{jxin@math.uci.edu}
\thanks{$^*$ Corresponding author}
\begin{document}
\maketitle

\centerline{\scshape Ziang Long$^*$}
\medskip
{\footnotesize
 \centerline{Department of Mathematics, University of California, Irvine, Irvine, CA 92697, USA}
} 

\medskip
\centerline{\scshape Penghang Yin}
\medskip
{\footnotesize
 \centerline{Department of Mathematics and Statistics, State University of New York at Albany, Albany, NY 12222, USA}
}

\medskip
\centerline{\scshape Jack Xin}
\medskip
{\footnotesize
 \centerline{Department of Mathematics, University of California, Irvine, Irvine, CA 92697, USA}
}

\bigskip


\begin{abstract}
In this paper, we study the dynamics of gradient descent in learning neural networks for classification problems. Unlike in existing works, we consider the linearly non-separable case where the training data of different classes lie in orthogonal subspaces. We show that when the network has sufficient (but not exceedingly large) number of neurons, (1) the corresponding minimization problem has a desirable landscape where all critical points are global minima with perfect classification; (2) gradient descent is guaranteed to converge to the global minima. Moreover, we discovered a geometric condition on the network weights so that when it is satisfied, the weight evolution transitions from a slow phase of weight direction spreading to a fast phase of weight convergence. The geometric condition says that the convex hull of the weights projected on the unit sphere contains the origin. 
\end{abstract}

\section{Introduction}
Deep neural networks (DNN) have achieved remarkable performances in image and speech classification tasks among other AI applications in recent years; for examples, see \cite{hochreiter1997long,imagnet_12,faster_rcnn,silver2016mastering}. Although there have been numerous theoretical contributions to understand their success, the learning process in the actual network training remains largely  empirical. One interesting phenomenon
is that over-parametrized DNN's trained 
by stochastic gradient descent generalize \cite{bound1,land1} instead of overfitting the training data contrary to conventional statistical learning. Though several convergence results are proved in the over-parameterized regime for deep networks \cite{du2,ll1,zhu1}, the network weights move only in a small neighborhood of the random initialization and so their dynamics are very localized. Partly, this may be attributed to the exceedingly large number of neurons in convergence theory, far surpassing what is used in practice where the weights evolve significantly from random start through hundreds of epochs in training to reach best prediction accuracy. 

Our work here addresses how the weights evolve towards a global minimum of loss function as the number of neurons increases from the feature dimension (the least necessary) to the over-parametrized regime. To facilitate analysis, our model network structure is motivated by  
\cite{bru1} on classifying linearly separable data. We instead study a linearly non-separable multi-category classification problem with an emphasis on the dynamics of weights in terms of the two time scales of evolution and a geometric characterization of the transition time. Our training data of the two classes will lie in 
orthogonal sub-spaces, which extends the data configuration in \cite{bru2} where 
the subspace of each class is one dimensional for an XOR detection problem. Orthogonality of input data from the two classes implies that  the training process in each class can be analyzed independently of the other. In the one-dimensional case \cite{bru2}, each weight update does not increase the loss on any sample point. In the multi-dimensional case here, we find that during gradient descent weight update, it is not possible that the loss is non-increasing in the point-wise sense (on each input data). Instead, the population loss is decreasing (i.e. in the sense of expectation). The population loss here is based on the hinge loss function and the network activation function is ReLU. Under a mild non-degenerate data condition, we prove that all critical points of our non-convex and non-smooth  population loss function are global  minima. Similar landscapes (a local minimum is a global minimum) are known for deep networks with activation functions that are either strictly convex \cite{liang},  or real analytic and strictly increasing \cite{land1}. 

\subsection{Prior Works and Our Contributions}
In DNN training, one observes that the network learning consists of alternating phases: plateaus where the validation error remains fairly constant and periods of rapid improvement where a lot of progress is made over a few epochs. Prior to our work, \cite{conv1} studied slow and fast weight dynamics in a solvable model while minimizing a binary cross entropy or hinge loss function on linearly separable data. 
In the regression context, \cite{brutzkus2017globally} came across such two time-scale phenomenon in training a two-linear-layer convolutional network with prescribed ground truth and unit Gaussian input data. This particular data assumption makes it possible to readily derive the closed-form expressions of the population loss and gradient, and then analyze the energy landscape and convergence of the gradient descent algorithm. 


 

In this work, we study network weight dynamics in training a one-hidden-layer ReLU network via hinge loss minimization on multi-category classification of linearly non-separable data lying in $n$ orthogonal sub-spaces. Our main contributions are:  
\begin{itemize}
    \item We discovered a geometric condition (GC) to characterize the transition time $T$ from the first (slow) phase of weight evolution to 
    the second fast weight convergence. 
    The condition says that the convex hull of the 
    weights on the unit sphere contains the origin, see Fig. \ref{simplex} for an illustration. Equivalent geometric conditions are also derived (Lemma 1). In the first (slow) phase, the weight directions spread out over the unit sphere to satisfy GC.  
    
    \item We obtain upper bound on $T$
    in terms of data distribution function provided that the network weights are uniformly bounded during training which we observed numerically.  
    
    \item We give probabilistic bounds on the validity of geometric condition for random initialization, which suggests that the larger the number of neurons, the more likely GC holds and the earlier the fast phase of evolution begins. 
    
    \item We prove the global convergence of gradient descent training algorithm under the uniformly bounded weight assumption. In case of positive network bias, we prove a global Lipschitz gradient property of the loss function and sub-sequential convergence of weights to a global minimum. In case of zero network bias, we prove that the loss function has Lipschitz gradient away from the origin and is piece-wise $C^1$. 
    
    \item We prove that all critical points of the population loss function are global minima under a non-degenerate data condition.    
    
    \item We provide numerical examples to substantiate our theory, extend the data assumption, and illustrate the weight dynamics as the network size increases towards the over-parametrized regime. We visualize the feature and weight vectors in DNNs  on MNIST data in connection with our model findings.

\end{itemize}

\noindent{\bf Organization.} In section 2, we introduce the settings of the classification problem, including the assumptions on the data and network architecture. In section 3, we state the  main results regarding the convergence guarantee of the gradient descent algorithm for training the neural net in the cases of with and without a bias term in the linear layer. In section 4, we present preliminaries about the landscape of the training loss function. The convergence analysis of main results will be sketched in section 5. In section 6, we substantiate our theoretical findings with numerical simulations. All the technical proofs are detailed in the appendix.
\medskip

\noindent{\bf Notations.} We denote by $\mathcal{S}^{d-1}$ the unit sphere in $\mathbb{R}^d$, and $\left|\mathcal{S}^{d-1}\right|$ the area of the unit sphere in the corresponding dimension. For any finite dimensional linear space $V\subseteq \mathbb{R}^d$, we define $V^k$ to be the collection of matrices of form $\left[\bm x_1,\cdots,\bm x_k\right]\in\mathbb{R}^{d\times k}$, where $\bm x_j\in V$ is the $j$-th column vector. For any set $\mathcal{X}$, $\mathds{1}_{\mathcal{X}} (x) = 1$ if $x\in\mathcal{X}$ else $0$, is the indicator function of $\mathcal{X}$. For any vector $\bm x\in \mathbb{R}^d$, we denote $|\bm x|$ be the $\ell_2$ norm of $\bm x$. For a matrix $\W=[\bm w_1,\cdots,\bm w_k]\in\mathbb{R}^{d\times k}$,
$|\W|:=\sum_{j=1}^k\left|\bm w_j\right|$ is the column-wise $\ell_2$-norm sum.


\section{Problem Setup}
In this section, we consider the multi-category classification problem in the $d$-dimensional space $\mathcal{X}=\mathbb{R}^{d}$.
 Let $\mathcal{Y}=[n]:=\{1,2,\cdots,n\}$ be the set of labels, and $\set{\D_i}_{i=1}^n$ be $n$ probabilistic distributions over $\mathcal{X}\times\mathcal{Y}$. Throughout the theoretical analysis of this paper, we make the following assumptions on the data:
\begin{enumerate}
	\item \textbf{(Separability)}  There are $n$ orthogonal subspaces $V_i\subseteq\mathcal{X}$ for $i\in[n]$ with $\mathop{\bigoplus}_{i=1}^nV_i=\mathcal{X}$,  such that  $$\mathop{\mathbb{P}}_{(\bm x,y)\sim\mathcal{D}_i}\left[\bm x\in V_i\A y=i\right]=1.$$
    
    \item \textbf{(Boundedness of data)} For $i\in[n]$, There exist positive constants $m_i$ and $M_i$, such that $$\mathop{\mathbb{P}}_{(\bm x,y)\sim\mathcal{D}_i}\left[m_i\leq\left|\bm x\right|\leq M_i\right]=1.$$
    
    \item \textbf{(Boundedness of p.d.f.)} For $i \in [n]$, let $p_i$ be the probability density function of distribution $\mathcal{D}_i$ restricted on $V_i$. For any $\bm x\in V_i$ with $m_i<|\bm x|<M_i$, it holds that $$0<p_{\text{min}}\leq p_i(\x)\leq p_{\text{max}}<\infty.$$ 
\end{enumerate}

Later on, we denote $\mathcal{D}$ to be the evenly mixed distribution of $\mathcal{D}_i$'s. For notation simplicity, we let $m=\min\limits_{i\in[n]}m_i$, $M=\max\limits_{i\in[n]}M_i$ and $d_i=\text{dim}V_i$.

We consider a two-layer neural network with $k$ hidden neurons. Denote 
by $\W=\left[\bm w_1,\cdots,\bm w_k\right]\in \mathbb{R}^{d\times k}$ the weight matrix in the hidden layer. For any input data $\bm x\in\mathcal{X}=\mathbb{R}^{d}$, we have
$$h_j=\inner{\w_j}{\x}-b_j\A f_i=\sum_{j=1}^n v_{i,j}\sigma(h_j)$$
and the neural net outputs 
\begin{equation}\label{net1}
f\left(\W; \bm x\right)=\bm V\sigma\left(\bm W^\top\x\right)=\left[f_1,\cdots,f_n\right]^\top,
\end{equation}
where $\sigma := \max(\cdot, 0)$ is the ReLU function acting element-wise, and the bias $b_j\geq 0$ and $\bm V=\left(v_{i,j}\right)$ are constants. 
Throughout this paper, we assume the following:
\begin{assumption}\label{v}
$\bm V=\left(v_{i,j}\right)\in\mathbb{R}^{k\times n}$ satisfies 
\begin{enumerate}
    \item For any $i\in[n]$, there exists some $j\in[n]$ such that $v_{i,j}>0$.
    \item If $v_{i,j}>0$ then $v_{r,j}<0$ for all $r\not=i$ and $r\in[n]$.
    \item There exists some constant $v>0$ such that $|v_{i,j}|=v$.
\end{enumerate} 
\end{assumption}
One can show that as long as $k\geq n$, such $\bm V$ can be constructed easily. 

The prediction is given by the maximum coordinate index of the network output 
$$\hat{y}\left(\W;\x\right)=\argmax{i\in[n]}f_i,$$
ideally $\hat{y}(x)=i$ if $x\in V_i$. The classification accuracy in percentage is the frequency that this occurs (when network output label $\hat{y}$ matches the true label) on a validation data set. 
Given the data sample $\{\x, y\}$, the associated hinge loss function reads 
\begin{equation}\label{sample_loss}
l(\W; \{ \x, y \}) :=  \sum_{i\not=y}\max\left\{0,1-f_y+f_i\right\}.
\end{equation}
For network training, we consider the gradient descent algorithm with step size $\eta>0$
\begin{equation}\label{gd}
\W^{t}= \W^{t-1}-\eta\nabla l(\W^{t-1})
\end{equation} 
to solve following population loss minimization problem
\begin{equation}\label{loss}
\min_{\W\in\mathbb{R}^{d\times k}}\; l\left(\W\right) = \mathop{\mathbb{E}}_{\{ \x, y \}\sim\mathcal{D}}\left[ l\left(\W; \{\x, y\}\right)\right],
\end{equation}
where the sample loss function $l\left(\W; \{\x, y\}\right)$ is given by (\ref{sample_loss}).
Let $l_i$ be the population loss function of data type $i$. More precisely, 
$$l_i(\W)=\mathop{\mathbb{E}}_{(\bm x,y)\sim\mathcal{D}_i}\left[l\left(\W;\set{\x,y}\right)\right]=\E{(\x,y)\sim\mathcal{D}_i}{\sum_{r\not=i}\sigma\left(1-f_i+f_r\right)}.$$
Thus, we can rewrite the loss function as 
$$l(\W)=\frac{1}{n}\sum_{i=1}^nl_i(\W).$$
Note that the population loss function
$$
l_i(\W)=\sum_{r\not=i}\int_{\set{f_i<f_r+1}}\left(1-f_i+f_r\right)p_i\left(\bm x\right)\;d\,\bm x
$$
has no closed-form solution even if $p$ is a constant function on its support. We cannot use closed-form formula to analyze the learning process, which makes our work different from many other works. 
\section{Main Results}
Although (\ref{loss}) is a non-convex optimization problem, we show that under mild conditions, the gradient descent algorithm (\ref{gd}) converges to a global minimum with zero classification error. 
Specifically, we consider two different networks with a positive bias $b_j>0$ (Theorem \ref{main1}) and without a bias (Theorem \ref{main2}), respectively. For both cases, we have the fact that any critical point of problem (\ref{loss}) is a global minimum (Proposition \ref{globalmin}). The key difference between these two cases is that the population loss function has Lipschitz continuous gradient (Lemma \ref{lipgrad}) when $b_j>0$, whereas this desirable property does not hold otherwise. For the latter case $b_j=0$, we present a totally different analysis based on a geometric condition proved to emerge during the training process (Proposition \ref{phase_1}). Under this geometric condition, the objective value converges zero (Proposition \ref{phase_2}).



\begin{theorem}\label{main1}
Assume $0<\sum_{j=1}^k b_j<1$ and assumption \ref{v} holds in (\ref{net1}), and $\{\W^{t}\}$ generated by the algorithm (\ref{gd}) are bounded uniformly in $t$. If there exists $(\bm x,i)\sim\mathcal{D}_i$ and some indices $j\in[k]$, such that $v_{i,j}>0$ and $\left|\inner{\w_j^0}{\x}\right|>b_j$, then there exists some $\eta_0\left(v,k,b,p_{\text{max}},n,M\right)>0$ such that for the learning rate $\eta<\eta_0$, $\lim\limits_{t\rightarrow\infty}l_i\left(\W^t\right)=0$ and 
$$\lim_{t\rightarrow\infty}\mathop{\mathbb{P}}_{\{\x,i\}\sim\mathcal D_i}\left[\hat{y}\left(\W^{t};\x\right)\neq i\right]=0, \;\; i\in[n], \; \forall \, n \geq 2.$$
\end{theorem}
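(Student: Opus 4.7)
The plan is the classical ``Lipschitz smooth plus benign landscape'' convergence argument for gradient descent, assembling two ingredients already established in the paper: the global Lipschitz continuity of $\nabla l$ in the positive-bias regime (Lemma \ref{lipgrad}) and the fact that every critical point of $l$ is a global minimum with loss value zero (Proposition \ref{globalmin}). Since $\{\W^t\}$ is assumed uniformly bounded, say by a radius $R$, I would take $L=L(v,k,b,p_{\text{max}},n,M,R)$ to be the Lipschitz constant of $\nabla l$ on the ball of radius $R$ and set $\eta_0:=2/L$; the positive bias assumption is crucial here, since Lemma \ref{lipgrad} provides \emph{global} Lipschitzness only when $\sum_j b_j>0$.

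For any $\eta<\eta_0$, the standard descent lemma then yields
$$l(\W^{t+1})\;\leq\;l(\W^t)-\eta\Bigl(1-\tfrac{\eta L}{2}\Bigr)\|\nabla l(\W^t)\|^2,$$
so $\{l(\W^t)\}$ is non-increasing and bounded below by $0$, hence converges; telescoping forces $\|\nabla l(\W^t)\|\to 0$. The boundedness of $\{\W^t\}$ and Bolzano--Weierstrass then give a convergent subsequence $\W^{t_k}\to\W^*$, and continuity of $\nabla l$ yields $\nabla l(\W^*)=0$. Proposition \ref{globalmin} forces $l(\W^*)=0$, which pins down the monotone limit uniquely, so $l(\W^t)\to 0$ along the full sequence. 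Because $l=\tfrac{1}{n}\sum_i l_i$ with each $l_i\geq 0$, each per-class loss $l_i(\W^t)\to 0$ as well.

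The misclassification statement is then a pointwise consequence of the hinge loss: whenever $\hat{y}(\W;\x)\neq i$ on $(\x,i)\sim\mathcal D_i$, we have $f_{\hat y}\geq f_i$, so $\sigma(1-f_i+f_{\hat y})\geq 1$ and hence $\mathds{1}_{\hat y\neq i}\leq l(\W;\{\x,i\})$; integrating gives $\P{\{\x,i\}\sim\mathcal D_i}{\hat y(\W;\x)\neq i}\leq l_i(\W)$, which vanishes as $t\to\infty$. The initialization condition $|\inner{\w_j^0}{\x}|>b_j$ with $v_{i,j}>0$ (naturally read as holding for each class $i\in[n]$) plays the standard role of ruling out a degenerate start at which all neurons assigned to class $i$ are dead on $V_i$: such configurations would be fixed points of gradient descent with $l_i>0$, so they must be excluded for the non-degeneracy hypothesis behind Proposition \ref{globalmin} to be inherited by the subsequential limit. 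The main obstacle I anticipate is the quantitative control of $L$ in terms of $(v,k,b,p_{\text{max}},n,M,R)$: one must track how perturbations of $\W$ shift the moving integration boundary $\{f_i<f_r+1\}$ in $l_i(\W)=\sum_{r\neq i}\int_{\{f_i<f_r+1\}}(1-f_i+f_r)p_i(\x)\,d\x$ and bound the resulting change in $\nabla l$ uniformly; this is exactly where the positive bias is indispensable, and its absence in the $b_j=0$ case forces the alternate geometric-condition strategy of Theorem \ref{main2}.
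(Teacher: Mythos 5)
Your proposal is correct and follows essentially the same route as the paper: Lipschitz smoothness of the gradient (Lemma \ref{lipgrad}), the descent lemma with $\eta<2/L$, monotonicity and a Bolzano--Weierstrass subsequence converging to a critical point, then the landscape result (Proposition \ref{globalmin}) to conclude the loss limit is zero. In fact you fill in two steps the paper's own proof leaves implicit --- the explicit invocation of Proposition \ref{globalmin} at the limit point (including the need to verify its non-degeneracy hypothesis there) and the Markov-type bound $\P{\{\x,i\}\sim\mathcal D_i}{\hat y(\W^t;\x)\neq i}\leq l_i(\W^t)$ that yields the classification statement --- so no further changes are needed.
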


\begin{proof}[Proof of Theorem \ref{main1}]
By Lemma \ref{decompose}, we only need to prove the convergence of the simplified network (\ref{net2}).
From Lemma \ref{lipgrad}, we know $l_i(\W)$ has Lipschitz gradient. We can assume for any $\W_1,\W_2\in V_i^{k}$, we have
$$\left|\nabla l_i(\W_1)- \nabla l_i(\W_2)\right|\leq L\left|\W_1-\W_2\right|.$$
As long as we take $\eta<\frac{L}{2}$ in algorithm (\ref{gd}), we know
\begin{equation}\label{deloss}
l_i(\W^{t+1})
\leq l_i(\W^{t})-\left(\eta-\frac{\eta^2L}{2}\right)\left|\nabla l_i(\W^{t})\right|^2
\leq l_i(\W^{t}).
\end{equation}
Hence, $l_i(\W^{t})$ is monotonically decreasing. Therefore, for any convergent subsequence $\{\W^{t_k}\}$ with the limit $\W_0$,
 there exists $l_0\geq 0$ such that
$$\lim_{t\rightarrow\infty}l\left(\W^{t}\right)=\lim_{k\rightarrow\infty}l\left(\W^{t_k}\right)=l_0.$$
Now, we can take subsequence and limit on both side of equation (\ref{deloss}), we get
$$
l_0\leq l_0 - \left(\eta-\frac{\eta^2L}{2}\right)\left|\nabla l_i(\W_0)\right|^2.
$$
Now, we see that $\nabla l_i(\W_0)=\bm0$. 
\end{proof}

\begin{theorem}\label{main2}
Assume $b_j=0$ and assumption \ref{v} holds in (\ref{net1}), and $\set{\W^{t}}$ generated by algorithm (\ref{gd}) is bounded by $R$ uniformly in $t$. If there exist some $(\bm x,i)\sim\mathcal{D}_i$ and some indices $j\in[k]$, such that $v_{i,j}>0$ and $\inner{\w_j^0}{\x}\not=0$, then $\lim\limits_{t\rightarrow\infty}l_i\left(\W^t\right)=0$
and
$$\lim_{t\rightarrow\infty}\mathop{\mathbb{P}}_{\{\x,i\}\sim\mathcal D_i}\left[\hat{y}\left(\W^t;\x\right)\neq i\right]=0, 
\;\; i\in[n], \; \forall \, n \geq 2.
$$
\end{theorem}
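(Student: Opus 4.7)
The plan is to mimic the structure of the proof of Theorem \ref{main1}, replacing the missing global Lipschitz-gradient argument with the two-phase analysis advertised in the introduction. First I would invoke the decomposition lemma (Lemma \ref{decompose}) used in the proof of Theorem \ref{main1} to reduce to a single class, so that it suffices to show $l_i(\W^t) \to 0$ for each fixed $i \in [n]$, working inside the simplified network (\ref{net2}) restricted to $V_i^k$. Because $b_j=0$, the loss $l_i$ is only piecewise $C^1$ and has Lipschitz gradient \emph{only} away from the origin, so before any descent estimate can be applied one must ensure that the weights $\w_j^t$ of neurons with $v_{i,j}>0$ stay bounded away from $\bm 0$.

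Next I would split the trajectory into two regimes according to the geometric condition (GC): the convex hull of the normalized weights $\{\w_j^t/|\w_j^t|\}$, restricted to the index set $J_i=\{j:v_{i,j}>0\}$, contains the origin. By Lemma 1 this admits several equivalent formulations that are convenient for tracking along the iteration. In the first (slow) phase, GC fails; I would appeal to Proposition \ref{phase_1} to conclude that under the initial-alignment hypothesis $\langle\w_j^0,\x\rangle\neq 0$ for some $j\in J_i$, the weight directions spread over the unit sphere at a rate controlled from below by $p_{\min}$ and from above by the uniform bound $R$, until GC is achieved at some finite transition time $T<\infty$. The uniform bound $|\W^t|\le R$ is what makes the spreading rate quantitative and excludes pathological behaviour at infinity.

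In the second (fast) phase $t\ge T$, Proposition \ref{phase_2} supplies both that GC is preserved and that every neuron in $J_i$ is pushed uniformly away from the origin, so the iterates remain in a region where $\nabla l_i$ is Lipschitz with some constant $L$. The standard descent lemma then yields
\begin{equation*}
l_i(\W^{t+1}) \;\le\; l_i(\W^t) - \Bigl(\eta - \tfrac{\eta^2 L}{2}\Bigr)\left|\nabla l_i(\W^t)\right|^2
\end{equation*}
for $\eta<2/L$, so $l_i(\W^t)$ is monotone decreasing, the gradients are summable, and every limit point $\W_0$ of $\{\W^t\}$ satisfies $\nabla l_i(\W_0)=\bm 0$. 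Proposition \ref{globalmin} then identifies $\W_0$ with a global minimizer of $l_i$, which by the construction of the problem attains value $0$; monotonicity promotes this to $\lim_t l_i(\W^t)=0$. The zero-classification-error statement follows because $l_i(\W^t)\to 0$ forces $f_i\ge f_r+1>f_r$ for $\mathcal{D}_i$-almost every $\x$ in the limit.

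The hard part will be establishing Proposition \ref{phase_1}, i.e.\ that GC is achieved in finitely many steps. Without a closed form for $l_i$ or $\nabla l_i$, one has to derive an explicit lower bound on the angular displacement produced by each gradient step using only $p_{\min}$, $p_{\max}$, the data-size parameters $m,M$, and the weight bound $R$; near the unit sphere the ReLU hinge loss only produces sub-linear motion in the tangent direction, so care is needed to avoid vanishing progress. A secondary technical obstacle is handling the non-smoothness at $\w_j=\bm 0$ during the slow phase: one must verify that no neuron in $J_i$ exactly reaches the origin, using the initial alignment hypothesis together with a continuity argument on the one-sided directional derivatives of $l_i$.
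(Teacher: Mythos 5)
The two-phase skeleton you set up matches the paper: reduce to one class via Lemma \ref{decompose}, use Proposition \ref{phase_1} to bound the number of iterations at which the geometric condition fails, then handle the remaining iterations. The gap is in your treatment of the fast phase. You attribute to Proposition \ref{phase_2} two things it does not provide --- that GC is preserved for all $t\ge T$ and that every relevant neuron stays uniformly away from the origin so that $\nabla l_i$ is Lipschitz on a neighborhood of the trajectory --- and you then run the standard descent lemma. This cannot work as stated: the entire reason the paper treats $b_j=0$ separately is that the Lipschitz-gradient estimate of Lemma \ref{lipgrad} fails near the origin, and it fails not only for the positive neurons but also because the neurons with $v_{i,j}<0$ may sit arbitrarily close to $\bm 0$ throughout training (Lemma \ref{decent} only shows their norms do not increase once small; the paper explicitly describes them as trapped near the origin). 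So there is no region containing the iterates on which the inequality $l_i(\W^{t+1})\le l_i(\W^t)-(\eta-\eta^2L/2)\left|\nabla l_i(\W^t)\right|^2$ is justified, and the paper never claims $l_i(\W^t)$ is monotone in the zero-bias case.

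What Proposition \ref{phase_2} actually delivers is the direct bound $\sum_{t\in T_2} l_i(\W^t)^2\le 4\eta^{-1}vn^2C_pR^2M_i^2R$, obtained not from a descent lemma but from the observation that each iteration with loss $l_i(\W^t)$ increases $\sum_{v_{i,j}>0}\left|\w_j^t\right|$ by at least a constant multiple of $\eta\, l_i(\W^t)^2$, while that sum is capped by $R$. Combined with $\left|T_1\right|<\infty$ from Proposition \ref{phase_1}, the terms of this convergent series must tend to zero, which gives $l_i(\W^t)\to 0$ with no appeal to critical points, to Proposition \ref{globalmin}, or to monotonicity of the loss. Your final step also quietly requires $f(\W_0;\x)\neq\bm 0$ at the limit point in order to invoke Proposition \ref{globalmin}, which is an additional argument you have not supplied; the paper's route sidesteps it entirely. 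In short, the phase decomposition is right, but the mechanism you propose for the fast phase is the Theorem \ref{main1} mechanism, which is precisely what is unavailable when $b_j=0$; the correct mechanism is the weight-norm-growth and summability argument.
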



\begin{proof}[Proof of Theorem \ref{main2}]
By Proposition \ref{phase_1}, we know the iterates $\{\W^{t}\}$ stay in the first phase is bounded by $\norm{T_1}$. Combining Proposition \ref{phase_2} which shows the summation of squared loss values in phase two is bounded, the summation of all loss values in the learning process is bounded
$$\sum_{t\in T_2}l_i\left(\W^{t}\right)^2<\infty.$$
The desired result follows.
\end{proof}

\begin{remark}
The assumptions on the initialization $|\langle\bm w_j^{0},\bm x\rangle|>b_j$ in both theorems are natural. This assumption guarantees that the neuron $\bm w_j$ is activated by some input data.  Without this assumption, the algorithm suffers zero gradient and fails to update. 
\end{remark}

\begin{remark}
Theorem \ref{main2} does not explicitly require the learning rate $\eta$ to be small. However, a larger learning rate will implicitly result in a larger bound in Propositions  \ref{phase_1} and \ref{phase_2} which we used to prove Theorem \ref{main2}.
\end{remark}


\section{Preliminaries}
\subsection{Decomposition}
\begin{lemma}\label{decompose}
\begin{enumerate}
    \item For any $i\in[n]$, if $\bm W_i^*\in\mathbb{R}^{d\times k}$ solves the optimization problem
    $$\min_{\W\in V_i^{k}} \; l_i(\W),$$
    then $\bm W^*=\sum_{i=1}^n\W_i^*$ solves the original problem
    $$\min_{\W\in\mathbb{R}^{d\times k}} \; l(\W).$$
    \item If $\W'=\W-\eta\nabla_{\W}l_i(\W)$, then for any $r\not=i$, we have
    $$l(\W';\set{\x,r})=l(\W;\set{\x,r})$$
    for almost all $(\x,r)\sim\mathcal{D}_r$.
\end{enumerate}
\end{lemma}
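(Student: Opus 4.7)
The plan for both parts rests on the orthogonality of the subspaces $V_1,\ldots,V_n$ and on the fact that data of class $i$ is supported in $V_i$.

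For part (2), the first step is to show that every column of $\nabla_{\W} l_i(\W)$ lies in $V_i$. Differentiating $l_i(\W)=\E{(\x,y)\sim\D_i}{\sum_{r\ne i}\sigma(1-f_i+f_r)}$ column-wise and passing the gradient inside the expectation (legitimate because the kink loci $\set{1-f_i+f_r=0}$ and $\set{\inner{\w_j}{\x}=b_j}$ are finite unions of hyperplanes, hence Lebesgue-null, while the p.d.f.\ bound $p_i\le p_{\max}$ together with the compact support of $\D_i$ supplies an integrable envelope for dominated convergence), every contribution to $\nabla_{\w_j} l_i(\W)$ is of the form $c(\x)\,\x$ with $\x\in V_i$ almost surely. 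Since $V_i$ is a linear subspace, $\nabla_{\w_j}l_i(\W)\in V_i$. Now fix $r\ne i$ and draw $(\x,r)\sim\D_r$: almost surely $\x\in V_r\perp V_i$, so $\inner{\nabla_{\w_j} l_i(\W)}{\x}=0$ for every $j$, giving $\inner{\w_j'}{\x}=\inner{\w_j}{\x}$. Consequently every pre-activation $h_j$ and every output $f_s$ is unchanged, and so is the per-sample loss $l(\W';\set{\x,r})$.

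For part (1), the key observation is that $l_i(\W)$ depends on $\W$ only through its $V_i$-projection. Decompose each column as $\w_j=\sum_{i=1}^n \w_j^{(i)}$ with $\w_j^{(i)}\in V_i$, and set $\W^{(i)}:=[\w_1^{(i)},\ldots,\w_k^{(i)}]\in V_i^k$. For $\x\in V_i$, orthogonality gives $\inner{\w_j}{\x}=\inner{\w_j^{(i)}}{\x}$, so $l(\W;\set{\x,i})=l(\W^{(i)};\set{\x,i})$; taking expectation yields $l_i(\W)=l_i(\W^{(i)})\ge l_i(\W_i^*)$ by the assumed optimality of $\W_i^*$ on $V_i^k$. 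Averaging over $i$, $l(\W)\ge \frac{1}{n}\sum_i l_i(\W_i^*)$. For the candidate $\W^*=\sum_i\W_i^*$, because each $\W_i^*\in V_i^k$ and the $V_i$ are mutually orthogonal, the $V_j$-projection of $\W^*$ equals $\W_j^*$; hence $l_j(\W^*)=l_j(\W_j^*)$ for each $j$, so $l(\W^*)=\frac{1}{n}\sum_j l_j(\W_j^*)$ attains the lower bound.

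There is no deep conceptual obstacle; once orthogonality is invoked both claims reduce to linear-algebra bookkeeping. The only real technicality is the differentiation-under-the-expectation step for the non-smooth ReLU loss in part (2), which is handled cleanly by the bounded-density and bounded-support hypotheses plus the fact that any subgradient representative on the (measure-zero) non-differentiable set still lies in $V_i$, so the pointwise identity $\inner{\w_j'}{\x}=\inner{\w_j}{\x}$ needed in part (2) holds for $\D_r$-almost every $\x$.
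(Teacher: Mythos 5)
Your proposal is correct and follows essentially the same route as the paper: decompose each weight column along the orthogonal subspaces $V_i$, observe that $l_i$ depends only on the $V_i$-projection (giving part (1) by attaining the separable lower bound), and note that $\nabla_{\w_j} l_i(\W)$ is a combination of data points in $V_i$, hence orthogonal to $V_r$ for $r\neq i$ (giving part (2)). The only difference is that you spell out the differentiation-under-expectation justification and the lower-bound bookkeeping that the paper leaves implicit.
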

\begin{proof}[Proof of Lemma \ref{decompose}]
Note that $\mathcal{X}=\mathbb{R}^{d}=\mathop{\bigoplus}_{i=1}^n V_i$, we decompose $\w_j=\sum_{i=1}^n\w_{j,i}$ where $\w_{j,i}\in V_i$.
	
Since $V_i$'s are orthogonal spaces, we have
$$\inner{\bm w_j}{\x}=\inner{\w_{j,i}}{\x}$$
if $\x\in V_i$. Now, assume $\x\in V_i$, let 
$$\W_i=\left[\w_{1,i},\cdots,\w_{k,i}\right],$$
we have $f(\W;\x)=f(\W_i,\x)$ and hence we get our first claim. 

On the other hand, we have
$$\nabla_{\bm w_j}l(\W;\set{\x,y})=-\sum_{i\not=y}\left(v_{y,j}-v_{i,j}\right)\mathds{1}_{\Omega_{y,i}}(\x)\mathds{1}_{\Omega_{\w_j}}(\x)\x,$$
where 
$$\Omega_{y,i}=\set{\x:f_y<f_i+1}\A\Omega_{\bm w_j}=\set{\x:\inner{\bm w_j}{\x}>b_j}.$$
Now, we see that $\nabla_{\w_j}l(\W;\set{\x,y})\in V_i$ for almost all $(\x,i)\sim \mathcal{D}_i$ so that $\W_r'=\W_r$ for all $r\not=i$ and hence our desired result follows.
\end{proof}

The optimization problem can be decomposed to $n$ independent problems of the same form i.e. the optimization of $l_i$'s. Therefore, it suffices to consider only one subproblem. Let $\W_i=\left[\bm w_1,\cdots,\bm w_k\right]\in V_i^{k}$, where $\bm w_j\in V_i=\mathbb{R}^{d_i}$, the network output for the input data $\bm x\in V_i=\mathbb{R}^{d_i}$ is given by
\begin{equation}\label{net2}
\tilde{f}^i(\W_i;\bm x)=[f_1,\cdots,f_n]^\top=\bm V\sigma\left(\W_i^\top\x\right).
\end{equation}

Networks (\ref{net1}) and (\ref{net2}) are different, since the parameters in (\ref{net1}) are $\W\in\mathbb{R}^{d\times k}$, whereas in (\ref{net2}), we have $\W_i\in V_i^{k}\cong\mathbb{R}^{d_i\times k}$. The corresponding input data are also in different intrinsic dimensions. From now on, we just focus on the loss function associated with data of Class $i$: 
$$l_i(\W)=\mathop{\mathbb{E}}_{(\bm x,y)\sim\mathcal{D}_i}\left[\sum_{r\not=i}\max\left\{0,1-f_i+f_r\right\}\right].$$

\subsection{Landscape}
The following Proposition \ref{globalmin} shows that while the loss function is non-convex, any critical point is in fact a global minimum, except for some degenerate cases. 
\begin{proposition}\label{globalmin}
Consider the neural network in (\ref{net2}). Assume $d>1$, if $\W$ is a critical point of $l_i(\W)$ and there exists some $\bm x\in V_i$ such that $f(\W;\bm x)\not=\bm0$ then we have $l_i(\W)=0$.
\end{proposition}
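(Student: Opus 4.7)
The plan is to argue by contradiction: assume $l_i(\W)>0$ and combine the critical-point condition with the structural constraints of Assumption~\ref{v} to force a contradiction with the existence of $\bm x_0\in V_i$ satisfying $f(\W;\bm x_0)\neq\bm 0$.

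First I would write the gradient explicitly. As in the proof of Lemma~\ref{decompose},
$$\nabla_{\bm w_j}l_i(\W)=-\sum_{r\neq i}(v_{i,j}-v_{r,j})\int_{\Omega_{i,r}\cap\Omega_{\bm w_j}}\bm x\,p_i(\bm x)\,d\bm x,$$
with $\Omega_{i,r}=\{f_i<f_r+1\}$ and $\Omega_{\bm w_j}=\{\langle \bm w_j,\bm x\rangle>b_j\}$. Taking the inner product of $\nabla_{\bm w_j}l_i(\W)=\bm 0$ against $\bm w_j$ turns each vector integral into a scalar integral of $\langle \bm w_j,\bm x\rangle$, which is strictly positive on $\Omega_{\bm w_j}$ since $\langle \bm w_j,\bm x\rangle>b_j\geq 0$. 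By Assumption~\ref{v}, when $v_{i,j}>0$ the coefficients $v_{i,j}-v_{r,j}=2v>0$ are all positive, so every term in the sum is nonnegative and must vanish individually; when $v_{i,j}<0$ and some unique $i'$ has $v_{i',j}>0$, only the term $r=i'$ survives. In either case I obtain the key rigidity $\mu_i(\Omega_{i,r}\cap\Omega_{\bm w_j})=0$ for every $(j,r)$ pair with nonzero coefficient.

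Since $l_i(\W)>0$, some $r\neq i$ satisfies $\mu_i(\Omega_{i,r})>0$. On $U_r:=\Omega_{i,r}\cap\mathrm{supp}(\mathcal D_i)$, the previous step forces every neuron $\bm w_j$ with $v_{i,j}>0$ or $v_{r,j}>0$ to be inactive a.e. Substituting into $f_s(\bm x)=\sum_j v_{s,j}\sigma(\langle \bm w_j,\bm x\rangle-b_j)$ and using $|v_{s,j}|=v$, the surviving contributions to $f_i$ and $f_r$ match term by term, so $f_i\equiv f_r$ a.e.\ on $U_r$. Repeating the argument with every $s\neq i$ shows that the misclassification regions $\Omega_{i,s}\cap\mathrm{supp}(\mathcal D_i)$ all coincide a.e.\ in a common set $U$, on which every $f_s$ takes the same nonpositive value, so that in the natural construction where every neuron serves some class we get $f\equiv\bm 0$ a.e.\ on $U$.

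The last and most delicate step is to promote this rigidity into a global contradiction with $f(\W;\bm x_0)\neq \bm 0$. The hypothesis produces at least one neuron $\bm w_{j^*}$ whose activation half-space $\Omega_{\bm w_{j^*}}\subset V_i$ is nonempty; using the assumption $d>1$ together with the uniform lower bound on $p_i$ from Assumption~3, I would argue that $\Omega_{\bm w_{j^*}}$ meets $\mathrm{supp}(\mathcal D_i)$ in a positive-measure set. This intersection must then lie in the correctly classified region $\mathrm{supp}(\mathcal D_i)\setminus U$, and combined with the rigid structure on $U$ established above, this should force $\mu_i(U)=0$, contradicting $l_i(\W)>0$. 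The main obstacle I foresee is precisely this final geometric step---ruling out the pathology of active neurons being ``shielded'' from the data shell by a large bias---which is what makes the dimensional hypothesis $d>1$ and the density condition genuinely necessary.
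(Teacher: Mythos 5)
Your route is essentially the paper's: pair $\nabla_{\w_j}l_i(\W)=\bm 0$ with $\w_j$, use the sign pattern of $v_{i,j}-v_{r,j}$ from Assumption~\ref{v} to force each term in the sum to vanish separately, conclude that $\Omega_{i,r}\cap\Omega_{\w_j}$ is $\mathcal D_i$-null whenever $v_{i,j}\neq v_{r,j}$, and then observe that on the misclassified set every relevant neuron is inactive, so $f$ degenerates there. Your first two paragraphs are in fact a more careful rendering of the published argument: the paper asserts ``each summand is a zero vector'' and ``$f_i=f_r=0$'' without justification, whereas you supply the sign bookkeeping and correctly flag that upgrading $f_i\equiv f_r$ to $f\equiv\bm 0$ on the bad set uses the column structure of $\bm V$ (one positive entry per column).

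The gap is exactly where you place it, and it is a real one: from ``$\Omega_{\w_{j^*}}$ meets $\mathrm{supp}(\mathcal D_i)$ in positive measure and that intersection lies in the good region'' one cannot conclude that the bad set $U$ is null --- nothing so far prevents the support from splitting into a positive-measure piece where no neuron fires (contributing loss) and a positive-measure piece where some neuron with $v_{i,j}>0$ fires strongly and $f_i\geq f_r+1$. The missing ingredient is a continuity--connectedness argument: $f$ is continuous and the annular support $\{m_i\leq|\x|\leq M_i\}$ in $V_i$ is connected precisely when $d_i>1$ (this, not the bias-shielding issue you cite, is the role of $d>1$); hence if $U=\{f_i<f_r+1\}\cap\mathrm{supp}(\mathcal D_i)$ is nonempty and proper, it has a relative boundary point $\x^*$ with $f_i(\x^*)-f_r(\x^*)=1$, while $f_i-f_r\equiv 0$ on the relatively open set $U$ accumulating at $\x^*$, a contradiction. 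So $U$ is either null or the whole support; in the latter case $f\equiv\bm 0$ on the support, and one must still rescale the hypothesized $\x_0$ into the annulus to contradict $f(\W;\x_0)\neq\bm 0$. To be fair, the paper's own proof leaps over this same step with the single sentence ``This contradicts with our assumption,'' so you have reproduced the published argument, gap included; a complete proof needs the boundary argument above.
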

\begin{proof}[Proof of Proposition \ref{globalmin}]
For any $j\in[k]$, we have
$$\nabla_{\w_j}l_i(\W)=-\sum_{r\not=i}^n(v_{i,j}-v_{r,j})\E{(\x,y)\sim\mathcal{D}_i}{\mathds{1}_{\Omega_{i,r}}(\x)\mathds{1}_{\Omega_{\w_j}}(\x)\x}=\bm0.$$
Recall the definition of $\Omega_{\w_j}$, we know that each summand is a zero vector. 
By assumption \ref{v}, we know that either $v_{i,j}=v_{r,j}$ or
$$\E{(\x,y)\sim\mathcal{D}_i}{\mathds{1}_{\Omega_{i,r}}(\x)\mathds{1}_{\Omega_{\w_j}}(\x)\x}=\bm0,$$
where the latter implies $\Omega_{i,r}\cap\Omega_{\bm w_j}=\emptyset$.
Observe that 
$$f_i-f_r=\sum_{j=1}^k\left(v_{i,j}-v_{r,j}\right)\sigma(h_j),$$
we see that if there exists some $(\x,y)\sim\mathcal{D}_i$ such that $f_i-f_r<1$ then there must exist some $\x\in\Omega_{i,r}$ but this implies $\x\not\in\Omega_{\bm w_j}$ for all $j\in[k]$ such that $v_{i,j}\not=v_{r,j}$ which gives $f_i=f_r=0$. This contradicts with our assumption, so we get $f_i-f_r\geq1$ for all $\x\sim\mathcal{D}_i$ and this implies $l_i(\W)=0$.
\end{proof}
The above result holds only when the global minimum of training loss function exists. The following proposition shows that the loss function has plenty of global minima. 
\begin{proposition}\label{plenty}
Consider the network in (\ref{net2}). If the convex hull spanned by vertices $\set{\bm w_j:v_{i,j}>0}$ contains a ball centered at the origin with radius $\max\limits_{j\in[k]}\frac{1+b_j}{m_i}$, and $\set{\bm w_j:v_{i,j}<0}$ lies in a ball with radius $\min\limits_{j\in[k]}\frac{b_j}{M_i}$, then $l_i(\W)=0$. 
\end{proposition}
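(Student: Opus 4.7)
The plan is to establish the pointwise inequality $f_i(\bm x)\geq 1+f_r(\bm x)$ for every $r\neq i$ and every $\bm x\in V_i$ with $m_i\leq|\bm x|\leq M_i$; integrating against $\mathcal{D}_i$ then forces each summand $\max\{0,1-f_i+f_r\}$ to vanish and hence $l_i(\W)=0$. Split the hidden neurons into the \emph{positive} set $S=\{j:v_{i,j}>0\}$ and the \emph{negative} set $N=\{j:v_{i,j}<0\}$; the small-norm hypothesis will silence $N$ on class-$i$ data, while the ball-in-hull hypothesis will make $S$ fire strongly enough.

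For each $j\in N$ the bound $|\bm w_j|\leq \min_j b_j/M_i$ combined with Cauchy--Schwarz and $|\bm x|\leq M_i$ yields $\langle\bm w_j,\bm x\rangle\leq b_j$, so $h_j\leq 0$ and $\sigma(h_j)=0$. Assumption \ref{v} forces $v_{r,j}>0\Rightarrow v_{i,j}<0$, so every neuron that contributes positively to $f_r$ lies in $N$ and is silent on class-$i$ inputs; consequently $f_r(\bm x)\leq 0$ on the support of $\mathcal{D}_i$.

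The crux is translating the geometric ball-in-convex-hull hypothesis into an analytic lower bound on $\sum_{j\in S}\sigma(h_j)$. Set $R=\max_j(1+b_j)/m_i$. Since the origin-centred ball of radius $R$ lies in $\mathrm{conv}\{\bm w_j:j\in S\}$, the point $R\bm x/|\bm x|$ admits a convex representation $R\bm x/|\bm x|=\sum_{j\in S}\alpha_j\bm w_j$ with $\alpha_j\geq 0$ and $\sum_{j\in S}\alpha_j=1$. Taking the inner product with $\bm x$ produces $\sum_{j\in S}\alpha_j h_j=R|\bm x|-\sum_{j\in S}\alpha_j b_j\geq Rm_i-\max_j b_j=1$, after which Jensen's inequality applied to the convex ReLU gives
$$\sum_{j\in S}\sigma(h_j)\;\geq\;\sum_{j\in S}\alpha_j\sigma(h_j)\;\geq\;\sigma\!\Bigl(\sum_{j\in S}\alpha_j h_j\Bigr)\;\geq\;\sigma(1)=1.$$
Multiplying by the common magnitude $v$ from Assumption \ref{v} (which makes $v_{i,j}-v_{r,j}=2v$ on $S$) and combining with $f_r\leq 0$ then yields $f_i(\bm x)-f_r(\bm x)\geq 1$ pointwise on the support of $\mathcal{D}_i$, up to the implicit normalization of $v$ that is absorbed into the constants of the radius $R$; this closes the argument.

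The principal obstacle is the third paragraph: the geometric ``ball inside a convex hull'' condition and the analytic ``ReLU output is at least $1$'' conclusion live in different worlds. The key idea is the choice of the outward point $R\bm x/|\bm x|$, which converts a convex combination of weights into the scalar $R|\bm x|$, together with the convexity of $\sigma$, which lets Jensen take over; the radius $R$ is engineered precisely to absorb $\max_j b_j$ and leave exactly the residual $1$ needed to beat the hinge margin.
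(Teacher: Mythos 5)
Your argument is correct and follows the natural route for this statement: silence the negative neurons by Cauchy--Schwarz (so $f_r\le 0$ and $f_i-f_r=2v\sum_{j\in S}\sigma(h_j)$ on the support of $\mathcal{D}_i$), then convert the ball-in-hull hypothesis into $\sum_{j\in S}\sigma(h_j)\ge 1$ by writing $R\bm x/|\bm x|$ as a convex combination of the $\bm w_j$ and invoking convexity of the ReLU. Two remarks. First, the Jensen step can be bypassed: since the ball of radius $R$ sits inside the convex hull of $\{\bm w_j: j\in S\}$, maximizing the linear functional $\langle\cdot,\bm x/|\bm x|\rangle$ over the hull is achieved at a vertex, so some single $j^*\in S$ satisfies $\langle\bm w_{j^*},\bm x\rangle\ge R|\bm x|\ge Rm_i=1+\max_j b_j$, hence $\sigma(h_{j^*})\ge 1$ already; this avoids the convex-combination bookkeeping. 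Second --- the only substantive issue --- your chain ends with $f_i-f_r=2v\sum_{j\in S}\sigma(h_j)\ge 2v$, whereas killing the hinge term requires $f_i-f_r\ge 1$. The factor $2v$ cannot simply be ``absorbed into the constants of the radius'' after the fact: as stated (with no $v$ appearing in the radius), the conclusion only follows when $2v\ge 1$; otherwise the radius must be enlarged to $\max_j(1+b_j)/(2v\,m_i)$. You should say explicitly which normalization you are adopting (the paper elsewhere normalizes $|v_{i,j}-v_{r,j}|=1$, which is exactly the convention that makes your last line close); with that one sentence made precise, the proof is complete.
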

The above proposition shows that if number of neurons is greater than the dimension of input data, then global minimum exists. Next, we study the smoothness of the loss function. 
The following proposition shows that as long as weights are bounded away from $0$, then the loss function has Lipschitz gradient. 
\begin{lemma}\label{lipgrad}
Consider the network in (\ref{net1}) with positive bias $0<\sum_{j=1}^kb_j<1$. The loss function $l(W)$ is Lipschitz differentiable, i.e, there exists some constant $L>0$ depending on $k, \, \bm b, p_{\text{max}}, \, ,M, \, \bm V$, such that
$$
\left|\nabla l(\W_1)-\nabla l(\W_2)\right|\leq L\left|\W_1-\W_2\right|.
$$
\end{lemma}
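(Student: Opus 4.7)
The plan is to invoke Lemma \ref{decompose} to reduce the Lipschitz estimate for $\nabla l$ to one for each component $\nabla l_i$ on $V_i^k$, and then work from the explicit integral form obtained in the proof of that lemma,
\begin{equation*}
\nabla_{\w_j} l_i(\W) = -\sum_{r\neq i} (v_{i,j}-v_{r,j}) \int_{V_i} \mathds{1}_{\Omega_{i,r}(\W)}(\x)\, \mathds{1}_{\Omega_{\w_j}}(\x)\, p_i(\x)\, \x\, d\x,
\end{equation*}
with $\Omega_{\w_j}=\{\inner{\w_j}{\x}>b_j\}$ and $\Omega_{i,r}(\W)=\{f_i(\W;\x)<f_r(\W;\x)+1\}$. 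Fix two candidates $\W,\W'\in\mathbb{R}^{d\times k}$. Applying the pointwise inequality $|\mathds{1}_A\mathds{1}_B-\mathds{1}_{A'}\mathds{1}_{B'}|\leq \mathds{1}_{A\triangle A'}+\mathds{1}_{B\triangle B'}$ together with $|\x|\leq M$ and $p_i\leq p_{\text{max}}$ reduces the task to showing that the Lebesgue measures of the two symmetric differences $\Omega_{\w_j}\triangle\Omega_{\w_j'}$ and $\Omega_{i,r}(\W)\triangle\Omega_{i,r}(\W')$, each intersected with $B_M:=\{|\x|\leq M\}$, are bounded by a constant multiple of $|\W-\W'|$.

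For the half-space symmetric difference, the crucial observation is that $\Omega_{\w_j}\cap B_M=\emptyset$ whenever $|\w_j|<b_j/M$, because then $\inner{\w_j}{\x}\leq|\w_j|M<b_j$ for every $\x\in B_M$. Hence the symmetric difference can only be nontrivial when $\max(|\w_j|,|\w_j'|)\geq b_j/M$, and in that regime an elementary geometric argument bounds its measure by $C_d\, M^{d+1}b_j^{-1}|\w_j-\w_j'|$: in the \emph{both weights large} case this is the wedge estimate between two hyperplanes whose angle is $O(|\w_j-\w_j'|/\min(|\w_j|,|\w_j'|))$, and in the transition case where only one weight exceeds $b_j/M$ the relevant spherical cap has volume depending super-linearly on the excess and is therefore dominated by $|\w_j-\w_j'|$. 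The strict positivity of each $b_j$ is exactly what makes $1/b_j$ finite and produces a global constant.

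For the ReLU-induced symmetric difference, note that $\W\mapsto f_i(\W;\x)-f_r(\W;\x)$ is $(2vM)$-Lipschitz in $\W$ by the non-expansiveness of ReLU and $|\x|\leq M$, so the symmetric difference lies in the slab
\begin{equation*}
\bigl\{\x\in B_M:\ \bigl|f_i(\W';\x)-f_r(\W';\x)-1\bigr|\leq 2vM\,|\W-\W'|\bigr\}.
\end{equation*}
I would estimate its measure by partitioning $B_M$ into the polyhedral activation cells of the ReLUs under $\W'$; on each cell $f_i-f_r$ is affine, and a standard slab-width estimate contributes a term of order $M^{d-1}|\W-\W'|$ per cell. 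Summing over $j\in[k]$ and $r\neq i$ and collecting all constants then yields the bound $L=L(k,\bm b,p_{\text{max}},M,\bm V)$. The main obstacle is the treatment of the \emph{flat} activation cells on which $f_i-f_r$ is locally constant: the slab-width estimate degenerates there, and one has to argue that the locus of $\W$ at which a flat value coincides with the hinge threshold $1$ is measure-zero, so that the Lipschitz inequality holds on the full-measure set where $\nabla l$ is genuinely differentiable. The positive bias condition $0<\sum_j b_j<1$ enters once more in this step by precluding the accidental alignments of activation boundaries with the hinge threshold.
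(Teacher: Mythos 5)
Your overall decomposition matches the paper's: reduce to the per-class losses $l_i$, write $\nabla_{\w_j}l_i$ as an integral against the product of the two indicators, split the difference of gradients into the two symmetric differences $\Omega_{i,r}(\W)\triangle\Omega_{i,r}(\W')$ and $\Omega_{\w_j}\triangle\Omega_{\w_j'}$, and handle the half-space part via the dichotomy $|\w_j|\lessgtr b_j/M$ plus a wedge estimate with $\sin\theta\leq M|\w_j-\w_j'|/b_j$. That part of your argument is essentially the paper's.

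The gap is in the hinge-threshold term, and it is exactly the one you flag at the end without resolving. Your per-cell slab estimate degenerates not only on cells where $f_i-f_r$ is constant but on any cell where its gradient is small: there the slab $\{|f_i-f_r-1|\leq 2vM|\W-\W'|\}$ has width of order $|\W-\W'|$ divided by that gradient, which is not uniformly bounded over $\W$. Arguing that the locus of weights at which a flat value coincides with the threshold $1$ is measure-zero cannot repair this, because the Lipschitz inequality must hold for \emph{every} pair $(\W_1,\W_2)$ with one constant $L$, and a bound that blows up as a cell gradient tends to zero is not a Lipschitz bound even if the blow-up is confined to a null set of weights. The paper closes this gap with a structural fact you are missing: writing $\phi(\W;\x)=f_i(\W;\x)-f_r(\W;\x)$ and $\x=r\omega$, on the activated set one has $r\,\frac{d}{dr}\phi(\W;\x)=\phi(\W;\x)+\sum_j(v_{i,j}-v_{r,j})\mathds{1}_{\Omega_{\w_j}}(\x)\,b_j\geq\phi(\W;\x)-\sum_j b_j$, so wherever $\phi$ is within $\epsilon|\x|$ of $1$ the \emph{radial} derivative is bounded below by the positive constant $C=\frac{1}{M}\bigl(\tfrac12-\sum_j b_j\bigr)$; near the level set $\{\phi=1\}$ the function cannot be radially flat, since flatness would force $\phi\leq\sum_j b_j<1$. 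A per-ray co-area argument then bounds the measure of the slab by $O(\epsilon/C)$ uniformly in $\W$, with no case analysis over activation cells. This quantitative lower bound is the actual role of the hypothesis $0<\sum_j b_j<1$, not the genericity of alignments you invoke, and without it your estimate of the first term does not go through.
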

\begin{proof}[Proof of Lemma \ref{lipgrad}]
Note that $l(\W)=\frac{1}{n}\sum_{i=1}^nl_i(\W)$, it suffice to show that each $l_i$ has Lipschitz gradient. Note that 
$$l_i(\W)=\sum_{r\not=i}\E{(\x,y)\sim\mathcal{D}_i}{\sigma\left(1-f_i+f_r\right)},$$
it suffice to show each summand has Lipschitz gradient. Now, we write the gradient of the summands as
$$
\begin{aligned}
&\nabla_{\w_j}\E{(\x,y)\sim\mathcal{D}_i}{\sigma\left(1-f_i+f_r\right)}\\
=&\E{(\x,y)\sim\mathcal{D}_i}{\nabla_{\w_j}\sigma\left(1-f_i+f_r\right)}\\
=&\E{(\x,y)\sim\mathcal{D}_i}{\mathds{1}_{\Omega_{i,r}}(\x)\mathds{1}_{\Omega_{\w_j}}(\x)\left(v_{r,j}-v_{i,j}\right)\x}.
\end{aligned}
$$
On one hand, if $v_{r,j}=v_{i,j}$, then the formula above is obviously zero and Lipschitz gradient follows. On the other hand, we can without loss of generality assume $\left|v_{i,j}-v_{r,j}\right|=1$.
For notation simplicity, we fix $i$ and $r$ and denote
$$\varphi(\W):=\E{(\x,y)\sim\mathcal{D}_i}{\mathds{1}_{\Omega_{i,r}}(\x)\mathds{1}_{\Omega_{\w_j}}(\x)\x}\A\phi(\W;\x)=f_i(\W;\x)-f_r(\W;\x).$$
Now, our desired result becomes $\varphi(\W)$ is Lipschitz in $\W$. Denote $\Omega_1=\Omega_{i,r}(\W_1)$ and $\Omega_2=\Omega_{i,r}(\W_2)$ where $\W_i=\left[\w_1^i,\cdots,\w_k^i\right]$, we only need to show there exists some constant $L$ such that
$$\norm{\varphi(\W_1)-\varphi(\W_2)}\leq L\norm{\W_1-\W_2}.$$
Note that 
$$\begin{aligned}
&\norm{\varphi(\W_1)-\varphi(\W_2)}\\
=&\norm{\E{}{\mathds{1}_{\Omega_1}(\x)\mathds{1}_{\Omega_{\bm w_j^1}}(\x)\x}-\E{}{\mathds{1}_{\Omega_2}(\x)\mathds{1}_{\Omega_{\bm w_j^2}}(\x)\x}}\\
\leq&\underbrace{\E{}{\mathds{1}_{\Omega_1\Delta\Omega_2}(\x)\norm{\x}}}_{\circled{1}}+\underbrace{\E{}{\mathds{1}_{\Omega_{\w_j^1}\Delta\Omega_{\w_j^2}}\norm{\x}}}_{\circled{2}},
\end{aligned}$$
we can deal with $\circled{1}$ and $\circled{2}$ respectively.

W.l.o.g, we assume $\epsilon=\norm{\W_1-\W_2}\leq\frac{1}{2}$.
On one hand, if $\x\in\Omega_1\Delta\Omega_2$, then we claim
$$1-\epsilon\norm{\x}\leq\phi(\W_1;\x)\leq1+\epsilon\norm{\x}$$
because
$$\begin{aligned}
&\norm{\phi(\W_1;\x)-\phi(\W_2;\x)}=\norm{\sum_{j=1}^k(v_{i,j}-v_{r,j})\left[\sigma(h_j^1)-\sigma(h_j^2)\right]}\\
\leq&\norm{\sum_{j=1}^k\inner{\w_j^1-\w_j^2}{\x}}\leq\norm{\W_1^\top\x-\W_2^\top\x}\leq\epsilon\norm{\x}.
\end{aligned}$$
Furthermore, we claim for such $\x$'s the gradient of $\phi$ is bounded away from zero. More precisely, with $\x=r\omega$ where $r=\norm{\x}$, we have
$$
\begin{aligned}
&\frac{d}{dr}\phi(\W;\x)=\left[\sum_{j=1}^k(v_{i,j}-v_{r,j})\mathds{1}_{\Omega_{\bm w_j}}(\x)\w_j\right]\omega\\
\geq&\left(\phi(\W;\x)-\sum_{j=1}^kb_j\right)/r\geq\frac{1}{M}\left(\frac{1}{2}-\sum_{j=1}^kb_j\right)=:C>0.
\end{aligned}
$$
Now, we have
$$
\begin{aligned}
&\circled{1}\leq\E{(\x,y)\sim\mathcal{D}_i}{\mathds{1}_{1-\epsilon\norm{\x}\leq\phi(\W_1,x)
\leq1+\epsilon\norm{\x}}}(\x)\norm{\x}\\
\leq&\int_{1-\epsilon\norm{\x}\leq\phi(\W_1,x)\leq1+\epsilon\norm{\x}}\norm{\x}p_{max}\,d\,\x\leq2\left(\norm{\mathcal{S}^{d_i-1}}\frac{M^{d_i}p_{max}}{C}\right)\epsilon.
\end{aligned}
$$

As for $\circled{2}$, w.l.o.g. we can assume $\norm{\w_j^1}\geq\norm{\w_j^2}$. Note that when $\norm{\w_j^1}\leq\frac{b_j}{M}$ then $h_j\leq0$ so that $\nabla_{\w_j}l_i(\W)=\bm0$ and this $\circled{2}=0$. Hence, we only need to take care of the case when $\norm{\bm w_j^1}\geq\frac{b_j}{M}$.
Note that $\norm{\w_j^1-\w_j^2}\leq\epsilon$ we know
$$\sin\theta\leq\frac{\epsilon M}{b_j},$$
where $\theta$ denotes the acute angle between $\w_j^1$ and $\w_j^2$. We have the following estimate
$$\circled{2}\leq p_{max}\frac{\epsilon M}{b_j}\norm{\mathcal{S}^{d_i-1}}.$$
Combine with $\circled{1}$, we get our desired result. 
\end{proof}

Note that Lipschitz differentiability in Lemma \ref{lipgrad} does not hold for the case $b_j=0$, as the gradient might be volatile near the origin.
\section{Convergence Analysis for Non-Bias Case}

With the Lipschitz differentiability shown in Lemma \ref{lipgrad} in the case $b_j>0$, it is not hard to prove the convergence result in Theorem \ref{main1}. In this section, we focus on the non-bias case ($b_j=0$) where the Lipschitz differentiability fails and sketch the convergence analysis. 


\begin{lemma}\label{decent}
Consider the network (\ref{net2}), $|\bm w_j^{t}|$ is non-decreasing in $t$ if $v_{i,j}>0$. For any $r>0$, choosing learning rate  $\eta<\min\left\{\frac{r}{C_pM_i^2},\frac{r}{2vnM_i}\right\}$, then $|\w_j^{t}|$ is non-increasing if $v_{i,j}<0$ and $\left|\bm w_j^t\right|>r$, where $C_p$ is a constant satisfying
$$C_p=\max_{\bm v\in V_i,a\in\mathbb{R}}\int_{\left\{\langle\bm v,\bm x\rangle=a\right\}}p_i(\x)\;d\,\x=O\left(p_{\text{max}}\,M^{d_i}\right).
$$
\end{lemma}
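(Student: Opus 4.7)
The plan is to track the change $|\w_j^{t+1}|^2-|\w_j^t|^2 = -2\eta\langle\w_j^t,\nabla_{\w_j}l_i(\W^t)\rangle + \eta^2|\nabla_{\w_j}l_i(\W^t)|^2$ and control its sign separately in the two sign regimes of $v_{i,j}$, using the explicit gradient formula
$$\nabla_{\w_j}l_i(\W)=-\sum_{r\neq i}(v_{i,j}-v_{r,j})\,\E{(\x,y)\sim\mathcal{D}_i}{\mathds{1}_{\Omega_{i,r}}(\x)\mathds{1}_{\Omega_{\w_j}}(\x)\,\x}$$
derived in the proof of Proposition \ref{globalmin}, together with the structural fact that Assumption \ref{v} forces every entry of $\bm V$ to lie in $\{-v,+v\}$ with at most one positive entry per column.

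First I would dispose of the case $v_{i,j}>0$. Then $v_{i,j}=v$ and $v_{r,j}=-v$ for every $r\neq i$, so each coefficient $v_{i,j}-v_{r,j}$ equals $2v>0$. Pairing the gradient with $\w_j^t$ and using $\langle\w_j^t,\x\rangle>0$ on $\Omega_{\w_j}$ gives $\langle\w_j^t,\nabla_{\w_j}l_i(\W^t)\rangle\leq 0$, whence $|\w_j^{t+1}|^2\geq |\w_j^t|^2+\eta^2|\nabla_{\w_j}l_i(\W^t)|^2\geq|\w_j^t|^2$ for any $\eta>0$, with no restriction on the step size.

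For the case $v_{i,j}<0$, Assumption \ref{v} forces $v_{i,j}=-v$ and at most one index $r^*\neq i$ can satisfy $v_{r^*,j}=v$. If no such $r^*$ exists the gradient vanishes and the claim is trivial, so I may assume a unique $r^*$ exists. Setting $A=\Omega_{i,r^*}\cap\Omega_{\w_j}$, $\mu(A)=\int_A p_i\,d\x$, and $g:=\nabla_{\w_j}l_i(\W^t)=2v\int_A\x\, p_i(\x)\,d\x$, one checks that $\langle\w_j^t,g\rangle\geq 0$, and the target inequality $|\w_j^{t+1}|^2\leq|\w_j^t|^2$ reduces to the scalar condition $\eta|g|^2\leq 2\langle\w_j^t,g\rangle$. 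The upper bound $|g|\leq 2vM_i\mu(A)$ is immediate from $|\x|\leq M_i$ on the support.

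The main obstacle is producing a matching lower bound for $\langle\w_j^t,g\rangle$, and this is precisely where $C_p$ enters. I would use a slab estimate: since $|\hat\w_j|=1$, the definition of $C_p$ together with the coarea formula yields $\mathbb{P}[\,\langle\hat\w_j,\x\rangle\in[0,\delta]\,]\leq C_p\delta$ for every $\delta>0$. Consequently the subset of $A$ on which $\langle\hat\w_j,\x\rangle\geq\delta$ carries mass at least $\mu(A)-C_p\delta$, so $\int_A\langle\hat\w_j,\x\rangle p_i\,d\x\geq\delta(\mu(A)-C_p\delta)$; optimizing in $\delta=\mu(A)/(2C_p)$ gives $\int_A\langle\hat\w_j,\x\rangle p_i\,d\x\geq\mu(A)^2/(4C_p)$, which lifts to $\langle\w_j^t,g\rangle\geq v|\w_j^t|\mu(A)^2/(2C_p)$. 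Substituting into $\eta|g|^2\leq 2\langle\w_j^t,g\rangle$, the $\mu(A)^2$ factors cancel and the sufficient condition becomes an inequality of the form $\eta\leq c\,|\w_j^t|/(v C_p M_i^2)$ for an absolute constant $c$, which is guaranteed by $\eta<r/(C_pM_i^2)$ once $|\w_j^t|>r$ after absorbing $v$ into the constant hidden in $C_p=O(p_{\max}M^{d_i})$. The companion bound $\eta<r/(2vnM_i)$ stems from the crude estimate $|g|\leq 2vnM_i$ valid without identifying $r^*$, and serves as a safety net ensuring $\eta|g|\leq r\leq|\w_j^t|$, so that the update cannot overshoot the origin and invalidate the angle-based sign analysis above.
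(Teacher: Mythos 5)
Your proposal is correct and follows essentially the same route as the paper's own proof: the same explicit gradient formula, the same sign argument via $\langle\w_j,\x\rangle>0$ on $\Omega_{\w_j}$ for the $v_{i,j}>0$ case, the same slab/level-set estimate through $C_p$ to lower-bound the radial component of the gradient by a multiple of $\mu(A)^2/C_p$ (your explicit optimization in $\delta$ is just the derivation the paper leaves implicit in its ``by definition of $C_p$'' step), the same squared-norm expansion (your $-2\eta\langle\w_j^t,g\rangle+\eta^2|g|^2$ is identical to the paper's normal/tangential decomposition), and the same reading of the two step-size constraints. The only discrepancy is a stray factor of order $v$ in matching the final inequality to the stated threshold $\eta<r/(C_pM_i^2)$, but the paper's own constants are equally loose at that step, so this does not constitute a gap in the argument.
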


\begin{proof}[Proof of Lemma \ref{decent}]
We first define 
$$C_p=\max_{\bm v\in V_1,a\in\mathbb{R}}\int_{\langle\bm v,\bm x\rangle=a}p_1(\x)\;d\,S\leq M^{d-1}p_{\text{max}}.$$

Recall the definition of $\Omega_{\w_j}$, for all $\bm x\in\Omega_{\w_j}$, we have $\left\langle\bm \tilde{\bm w}_j,\x\right\rangle>0$. For $v_{i,j}>0$,
$$\left\langle\tilde{\bm w}_j,-\nabla_{\bm w_j}l_i(\W)\right\rangle=2v\,\sum_{r\not=i}\mathop{\mathbb{E}}_{\x\sim\mathcal{D}_i}\left[\mathds{1}_{\Omega_{i,r}}(\x)\mathds{1}_{\Omega_{\w_j}}(\x)\langle\tilde{\bm w}_j,\x\rangle\right]\geq0.$$

Note we have from (\ref{gd}) that
$$\bm w_j^{t+1}=\bm w_j^{t}-\eta\nabla_{\bm w_j}l_i(\W^{t}).$$
So, 
$$
\left|\bm w_j^{t+1}\right|=\left\langle\bm w_j^{t+1},\tilde{\bm w}_j^{t+1}\right\rangle
\geq\left\langle \bm w_j^{t+1},\tilde{\bm w}_j^{t}\right\rangle
\geq \left\langle \bm w_j^{t},\tilde{\bm w}_j^{t}\right\rangle
=\left|\bm w_j^{t}\right|.
$$

Let $\Omega_{i,r}^j=\Omega_{i,r}\cap\Omega_{\bm w_j}$. For $v_{i,j}<0$, we know
$$\left|\nabla_{\w_j}l_i(\W)\right|=2v\left|\sum_{r\not=i}\mathop{\mathbb{E}}\left[\mathds{1}_{\Omega_{i,r}^{j}}(\x)\x\right]\right|\leq2v M\sum_{r\not=i}\mathbb{P}\left[\Omega_{i,r}^{j}\right]^2,$$
where we omit the distribution $\x\sim\mathcal{D}_i$.

On the other hand, by definition of $C_p$, we know
$$
\left|\left\langle\nabla_{\bm w_j}l_i(\W),\tilde{\bm w}_j\right\rangle\right|
=\left|2v\sum_{r\not=i}\mathop{\mathbb{E}}_{\x\sim\mathcal{D}_i}\left[\mathds{1}_{\Omega_{i,r}^{j}}(\x)\langle\tilde{\bm w}_j,\x\rangle\right]\right|
\geq\frac{v}{C_p}\sum_{r\not=i}\mathbb{P}\left[\Omega_{i,r}^{j}\right]^2.
$$

When $0<\eta<\frac{r}{2vnM}$, we have 
$$
\left\langle\w_j-\eta\nabla_{\w_j}l_i(\W),\tilde{\w}_j\right\rangle
=\left\langle\w_j,\tilde{\w}_j\right\rangle-\eta\left\langle\nabla_{\bm w_j}l_i(\W),\tilde{\bm w}_j\right\rangle
>r-2v\eta M\sum_{r\not=i}\mathbb{P}\left[\Omega_{i,r}^{j}\right]>0.
$$

Now, we decompose $\nabla_{\w_j}l_i(\W^{t})$ into two parts, 
$$
\nabla_{\bm w_j}l_i(\W^{t}) = \underbrace{\left\langle\tilde{\w}_j^{t},\nabla_{\bm w_j}l_i(\W^{t})\right\rangle \tilde{\w}_j^{t}}_{\bm n}
+ \underbrace{\left(\nabla_{\bm w_j}l_i(\W^{t}) -\left\langle\tilde{\w}_j^{t},\nabla_{\bm w_j}l_i(\W^{t})\right\rangle \tilde{\w}_j^{t}\right)}_{\bm \nu}.
$$
So that when $\eta<\frac{r}{M^2C_p}$, we have
$$
\begin{aligned}
&\left|{\w}^{t+1}_j\right|^2\\
=&\left|\bm w_j^{t}-\eta \nabla_{\bm w_j}l_i(\W^{t})\right|^2\\
=&\left|\bm w_j^{t}-\eta\bm n\right|^2+\left|\eta \bm \nu\right|^2\\
=&\left|\bm w_j^{t}\right|^2-\eta\left(2\left|\bm w_j^{t}
\right|\left|\bm n\right|-\eta\left(\left|\bm n\right|^2+\left|\bm \nu\right|^2\right)\right)\\
\leq&\left|\bm w_j^{t}\right|^2-2v\eta\left( \frac{\left|\bm w_j^{t}
\right|}{C_p} -\eta\,M^2 \right)\sum_{r\not=i}\P{}{\Omega_{i,r}^j}^2\\
\leq& \left|\bm w_j^{t}\right|^2,\\
\end{aligned}
$$
as long as $\left|\bm w_j\right|\geq r$. Now, we get the desired result. 
\end{proof}

The above theorem provides the dynamic information of the weights. When we input data with distribution $\mathcal D_i$, the weights $\set{\bm w_j:v_{i,j}>0}$ become more useful for classification as the norm of those $\bm w_j$'s grows larger on every iteration. On the other hand, $\set{\bm w_j:v_{i,j}<0}$ serve only as noise when $v_{i,j}<0$. 
On one hand, when $\w\in\set{\w_j:v_{i,j}<0}$ has large magnitude, the learning process guarantees the decreasing of $\norm{\w}$. On the other hand, when $\w\in\set{\w_j:v_{i,j}<0}$ has a small magnitude, it shall be trapped into a small region near the origin and contribute little to classification. 


The learning process consists of two phases. In the beginning, since the weights are randomly distributed, there may well be some data that does not activate any neurons. The weights then automatically spread out. We call this process the first (slow learning) phase, during which the learning process is rather slow. The next Lemma lists four equivalent statements of a geometric condition. When the geometrical condition holds, we say that the learning process enters the second (fast learning) phase. 
\begin{figure}[htp]
    \centering
    \begin{tikzpicture}
    [
    scale=2.5,
    >=stealth,
    point/.style = {draw, circle,  fill = black, inner sep = 1pt},
    dot/.style   = {draw, circle,  fill = black, inner sep = .2pt},
    ]
    \node (O) at (0,0)[point,color=red,label = {\textcolor{red}{$\bm 0$}}]{};
    \node (w1) at (0,0.9)[point, label={left:$\tilde{\bm w}_1$}]{};
    \node (w2) at (-0.75,-0.35)[point, label={below:$\tilde{\bm w}_2$}]{};
    \node (w3) at (0.3,-0.8)[point, label={below left:$\tilde{\bm w}_3$}]{};
    \node (w4) at (0.65,0.0)[point, label={right:$\tilde{\bm w}_4$}]{};
    \draw [line width=0.6mm](O) circle (1);
    \draw[-][line width=0.4mm,color=blue] (w1) -- (w2);
    \draw[-][line width=0.4mm,color=blue] (w1) -- (w3);
    \draw[-][line width=0.4mm,color=blue] (w1) -- (w4);
    \draw[-][line width=0.4mm,color=blue] (w2) -- (w3);
    \draw[-][line width=0.4mm,color=blue] (w3) -- (w4);
    \draw[dashed][line width=0.4mm,color=blue] (w2) -- (w4);
    \draw[dotted][line width=0.6mm] (1,0) arc (0:180:1 and 0.25);
    \draw[line width=0.6mm] (-1,0) arc (180:360:1 and 0.25);
    \end{tikzpicture}
    \caption{Geometric Condition in Lemma \ref{geo} ($d=3$)}
    \label{simplex}
\end{figure}
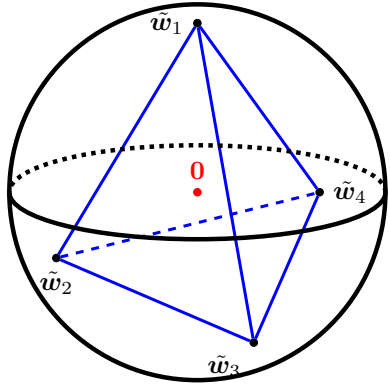
\begin{lemma}\label{geo}
Let $\tilde{\bm w}_j=\frac{\bm w_j}{\left|\bm w_j\right|}\in\mathcal{S}^{d-1}$ be $k$ points on the unit sphere, where $1\leq j\leq k$. Let  $\Lambda_W$ be the convex hull of $\left\{\tilde{\bm w}_j\right\}$ and $\left\{H_i\right\}_{i=1}^l$ be the facets of convex hull. 
The following statements are equivalent:
\begin{enumerate}
	\item For any unit vector $\bm n\in\mathcal{S}^{d-1}$, there exist some $j_1$ and $j_2$ such that $\langle\bm n,\tilde{\bm w}_{j_1}\rangle>0$ and $\langle\bm n,\tilde{\bm w}_{j_2}\rangle<0$.
	\item There exists no closed hemisphere that contains all $\tilde{\bm w}_j$.
	\item $\bm 0$ lies in the interior of $\Lambda_W$.
	\item For each $0\leq i\leq l$, we have $\bm 0$ and all $\tilde{\bm w}_j$ lie on the same side of $H_i$.
\end{enumerate}
\end{lemma}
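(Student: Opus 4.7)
The plan is to establish the four equivalences pairwise via (1)$\Leftrightarrow$(2), (2)$\Leftrightarrow$(3), and (3)$\Leftrightarrow$(4), using only elementary convexity (closed half-spaces, the separating hyperplane theorem, and supporting hyperplanes at boundary points) together with the basic fact that a polytope is the intersection of the closed half-spaces bounded by its facets.

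First I would handle (1)$\Leftrightarrow$(2) by taking contrapositives, which is essentially a translation. The negation of (2) asserts a unit $\bm n$ with $\langle \bm n, \tilde{\bm w}_j\rangle \geq 0$ for all $j$; this $\bm n$ immediately witnesses the failure of (1) because no $\tilde{\bm w}_{j_2}$ satisfies $\langle \bm n, \tilde{\bm w}_{j_2}\rangle < 0$. Conversely, if (1) fails for some $\bm n$, then either all inner products $\langle \bm n, \tilde{\bm w}_j\rangle$ are $\geq 0$ or all are $\leq 0$ (replace $\bm n$ by $-\bm n$ if necessary), exhibiting a closed hemisphere containing all the $\tilde{\bm w}_j$ and so negating (2).

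Next I would prove (2)$\Leftrightarrow$(3). For the non-trivial direction, assume $\bm 0$ is not in the interior of $\Lambda_W$. Either $\bm 0 \notin \Lambda_W$, in which case the strict separating hyperplane theorem (applied to $\{\bm 0\}$ and the compact convex set $\Lambda_W$) yields a unit $\bm n$ with $\langle \bm n, \tilde{\bm w}_j\rangle \geq c > 0$ for all $j$; or $\bm 0 \in \partial \Lambda_W$, in which case a supporting hyperplane at $\bm 0$ produces $\langle \bm n, \tilde{\bm w}_j\rangle \geq 0$ for all $j$. In both cases the $\tilde{\bm w}_j$ lie in a closed hemisphere, contradicting (2). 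Conversely, a closed hemisphere $\{\bm x : \langle \bm n, \bm x\rangle \geq 0\}$ containing all $\tilde{\bm w}_j$ also contains $\Lambda_W$, so every neighborhood of $\bm 0$ meets the open half-space $\{\langle \bm n, \bm x\rangle < 0\} \setminus \Lambda_W$, preventing $\bm 0$ from being an interior point.

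Finally I would prove (3)$\Leftrightarrow$(4) by recalling that the polytope $\Lambda_W$ equals the intersection $\bigcap_{i=1}^l H_i^+$, where $H_i^+$ is the closed half-space bounded by the facet hyperplane $H_i$ that contains $\Lambda_W$. Thus $\bm 0$ lies in the interior of $\Lambda_W$ iff $\bm 0$ lies in the open half-space $H_i^+ \setminus H_i$ for every $i$, which is precisely the statement that $\bm 0$ and all $\tilde{\bm w}_j$ sit strictly on the same side of each facet hyperplane $H_i$, giving (4). The step I expect to require the most care is (2)$\Leftrightarrow$(3), specifically the dichotomy between $\bm 0 \notin \Lambda_W$ (where one needs strict separation) and $\bm 0 \in \partial \Lambda_W$ (where one needs a supporting, not separating, hyperplane); and in (3)$\Leftrightarrow$(4) the interpretation of "same side" must be fixed as strict containment in the interior side of each facet so that it lines up with the topological notion of interior.
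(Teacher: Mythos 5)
Your proposal is correct, and it is organized differently from the paper's proof in a way that is worth noting. The paper closes the cycle by proving $(1)\Rightarrow(3)$ and $(3)\Rightarrow(1)$ directly: for $(1)\Rightarrow(3)$ it separates $\{\bm 0\}$ from the interior $\Lambda_W^{\circ}$ by Hahn--Banach, and for $(3)\Rightarrow(1)$ it writes $\bm 0=\sum_j\lambda_j\tilde{\bm w}_j$ with $\lambda_j>0$ and argues that the quantities $\langle\bm n,\tilde{\bm w}_j\rangle$ cannot all vanish, invoking a ``general position'' assumption on the $\tilde{\bm w}_j$ that is not actually a hypothesis of the lemma (it can be repaired by noting that $\bm 0\in\Lambda_W^{\circ}$ forces the $\tilde{\bm w}_j$ to span $\mathbb{R}^d$, but the paper does not say this). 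You instead prove $(2)\Leftrightarrow(3)$ by contrapositives: the direction $\neg(3)\Rightarrow\neg(2)$ via the dichotomy $\bm 0\notin\Lambda_W$ (strict separation) versus $\bm 0\in\partial\Lambda_W$ (supporting hyperplane), and the direction $\neg(2)\Rightarrow\neg(3)$ by observing that a closed hemisphere containing all $\tilde{\bm w}_j$ contains $\Lambda_W$ and therefore excludes $\bm 0$ from its interior. This buys you two things: it handles the degenerate case where $\Lambda_W$ has empty interior (where the paper's separation of $\{\bm 0\}$ from $\Lambda_W^{\circ}$ is vacuous), and it avoids the positive-combination argument and its unstated general-position requirement altogether. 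The treatments of $(1)\Leftrightarrow(2)$ and $(3)\Leftrightarrow(4)$ are essentially identical in both proofs, including the need you correctly flag to interpret ``same side'' of a facet hyperplane as strict containment for $\bm 0$ and closed containment for the vertices.
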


\begin{proof}[Proof of Lemma \ref{geo}]
$(1\Leftrightarrow2)$ is trivial. 

$(1\Rightarrow3)$ Proof by contradiction. Assume $\bm 0\not\in\Lambda_W^\circ$. Since $\Lambda_W^\circ$ is a open convex set, and $\left\{\bm 0\right\}$ is a convex set, we know from geometric form of Hahn-Banach Theorem, that there exist a closed hyper-plane that separates $\left\{\bm 0\right\}$ and $\Lambda_W^\circ$. Hence, there exist a unit vector $\bm n\in\mathcal{S}^{d-1}$, such that $\langle\bm n,\tilde{\bm w}_j\rangle\geq\langle\bm n,\bm 0\rangle=0$ for all $j$, but this contradict with our assumptions in 1.

$(3\Rightarrow4)$ Assume that $H_i=\left[\langle\bm v_i,\bm w\rangle=\alpha_i\right]$, where $\alpha>0$. 
Note that the convex hull $\Lambda_W$ is a polytope with faces $H_i$. We know, $\Lambda_W^\circ$ all lies on one side of $H_i$. Since $\bm0\in\Lambda_W^\circ$, we know for all $\bm w\in \Lambda_W^\circ$ we have $\langle\bm v_i,\bm w\rangle<\alpha_i$, and hence $\langle\bm v_i,\tilde{\bm w}_j\rangle\leq\alpha_i$ for all $j$.

$(4\Rightarrow3)$ is trivial.

$(3\Rightarrow1)$ $\bm 0\in \Lambda_W^\circ$ implies there exist positive numbers $\lambda_j$, such that 
$$\sum_{j=1}^k\lambda_j\tilde{\bm w}_j=\bm 0.$$
Hence for any unit vector $\bm n$, we have
$$\sum_{j=1}^{k}\lambda_j\langle\bm n,\tilde{\bm w}_j\rangle=0.$$
Since $\tilde{\bm w}_j$ are in general position, so $\langle\bm n,\tilde{\bm w}_j\rangle$ cannot all be $0$. Hence, there must be both positive and negative terms. Now, we get the desired result. 
\end{proof}

\begin{remark}\label{prob}
If $\bm w_j$ is initialized such that $\tilde{\bm w}_j$ is uniformly distributed on $\mathcal{S}^{d-1}$, then the probability that the geometric condition (GC) in Lemma \ref{geo} holds is 
$$
{\rm P_{gc}} :={\rm Prob}(\mbox{GC holds}) = 2^{1-k}\sum_{j=d}^{k-1}\binom{k-1}{j}.
$$
In particular, at any fixed feature dimension $d$, $$\lim\limits_{k\to \infty}\, {\rm P_{gc}}= 1.$$
\end{remark}
\begin{proof}[Proof of Remark \ref{prob}]
For any $J\in\{\pm1\}^k$, we let $S_J\left(\left\{\tilde{\bm w}_j\right\}\right)=\left\{J_j\tilde{\bm w}_j:1\leq j\leq k\right\}$. 
From the choice of $\tilde{\bm w}_j$, we know all $S_J$ have the same distribution, so that
$$
{\rm Prob}_{\left\{\tilde{\bm w}_j\right\}}({\rm GC }\, {\rm holds})={\rm Prob}_{S_J}({\rm GC }\, {\rm holds})
.$$
Hence, we can simplify the probability as follows
$$
{\rm Prob}({\rm GC }\, {\rm holds})=\frac{1}{2^n}\mathbb{E}\left[\sum_J \mathds{1}_{gc}\left(S_J\left(\left\{\tilde{\bm w}_j\right\}\right)\right)\right]
.$$
We claim that $\sum_J\mathds{1}_{gc}\left(S_J\left(\left\{\tilde{\bm w}_j\right\}\right)\right)$ is a constant independent of choice of $\tilde{\bm w}_j$ as long as they are in general position. 

Let $\tilde{\bm w}_j\in\mathcal{S}^{d-1}$ where $1\leq j\leq k$. Each $\tilde{\bm w}_j$ corresponds to a subspace $H_j$ with codimension $1$ in $\mathbb{R}^d$, that is 
$$H_j=\left\{\x\in\mathbb{R}^d:\langle\tilde{\bm w}_j,\bm x\rangle=0\right\}.$$
Note that any connected region of $\left(\cup_JH_J\right)^c$ corresponds to a choice of $J$ such that the geometric condition fails. More precisely, for any given connected region $D$ of $\left(\cup_JH_J\right)^c$, we know $\langle\tilde{\bm w}_j,\bm x\rangle$ is one sign for all $\bm x\in D$, so with
$$J_j={\rm sign}\left(\langle\tilde{\bm w}_j,\bm x\rangle\right),$$
we know $\bm x$ correspond to $S_J\left(\left\{\tilde{\bm w}_j\right\}\right)$ where the geometric condition fails. Hence, the number of connected regions of $\left(\cup_JH_J\right)^c$ is same as $2^n-\sum_J\mathds{1}_{gc}\left(S_J\left(\left\{\tilde{\bm w}_j\right\}\right)\right)$.
By \cite{count}, we have
$$\sum_J\mathds{1}_{gc}\left(S_J\left(\left\{\tilde{\bm w}_j\right\}\right)\right)=2^n-2\sum_{j=0}^{d-1}\binom{k-1}{j}=2\sum_{j=d}^k\binom{k-1}{j}.$$
The desired result follows. 
\end{proof}
Per Remark 3, the more neurons the network has, higher possibility the geometric condition in Lemma \ref{geo} holds upon initialization. As a consequence, the learning process skips the first phase and goes straight to the fast learning phase. This explains why gradient descent for learning a over-parameterized network converges rapidly to a nearby critical point from random initialization.   

The following proposition gives an upper bound on the maximum number of iterations for the learning process to enter the second phase. 

\begin{proposition}\label{phase_1}
Let $b_j=0$ in (\ref{net2}), and assume that $\left|\W^{t} \right|\leq R$ for all $t$. Let $T_1$ be the set of $t$ such that $\left\{\bm w_j^t:v_{i,j}>0\right\}$ does not satisfy the geometric condition in Lemma \ref{geo}, then
$\left|T_1\right|\leq\frac{C_pR}{v\eta p_R^2}$, where $p_R$ is a positive constant. Also, the following estimate holds for $p_R$:
$$p_R=\Omega\left(\frac{p_{\text{min}}}{\sqrt{d_i}\left(M_iR\right)^{d_i}}\right)$$
where $C_p$ is the constant in Lemma \ref{decent}. More precisely, 
$$\left|T_1\right|=O\left(\frac{C_pd_iR^{2d_i+1}M_i^{2d_i}}{v\eta p_{\text{min}}^2}\right)=O\left(\frac{C_pdR^{2d+1}M^{2d}}{v\eta p_{\text{min}}^2}\right).$$
\end{proposition}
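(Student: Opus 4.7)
The strategy is to use $\sum_{j:\,v_{i,j}>0}|\w_j^t|$ as a monotone potential: by Lemma~\ref{decent} it is nondecreasing in $t$, while $|\W^t|\leq R$ caps it by $R$, so its total increment across all iterations is bounded by $R$. Hence it suffices to produce, for each $t\in T_1$, a uniform positive lower bound on the per-iteration increment contributed by at least one positive-$v_{i,j}$ weight.

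I extract from the proof of Lemma~\ref{decent} the quantitative per-iteration estimate
$$|\w_j^{t+1}|-|\w_j^t|\;\geq\;\eta\left\langle\tilde{\bm w}_j^t,-\nabla_{\w_j}l_i(\W^t)\right\rangle\;\geq\;\frac{\eta v}{C_p}\sum_{r\neq i}\mathbb{P}\!\left[\Omega_{i,r}^{j,t}\right]^{2},$$
valid for every $j$ with $v_{i,j}>0$, where $\Omega_{i,r}^{j,t}$ denotes the set $\Omega_{i,r}\cap\Omega_{\w_j}$ of Lemma~\ref{decent} evaluated at $\W=\W^t$. I then define
$$p_R\;:=\;\inf_{t\in T_1}\;\max_{j:\,v_{i,j}>0}\;\max_{r\neq i}\mathbb{P}\!\left[\Omega_{i,r}^{j,t}\right]$$
so that at each $t\in T_1$ some index $j(t)$ produces per-iteration growth at least $\eta v p_R^2/C_p$. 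Telescoping and using nonnegativity of all increments gives
$$|T_1|\,\frac{\eta v p_R^2}{C_p}\;\leq\;\sum_{t\in T_1}\bigl(|\w_{j(t)}^{t+1}|-|\w_{j(t)}^t|\bigr)\;\leq\;\sum_{j:\,v_{i,j}>0}\bigl(|\w_j^{T}|-|\w_j^0|\bigr)\;\leq\;R,$$
which is the first claimed bound.

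The crux — and what I expect to be the main obstacle — is establishing $p_R=\Omega(p_{\min}/(\sqrt{d_i}(M_iR)^{d_i}))$. Whenever GC fails at time $t$, Lemma~\ref{geo}(2) supplies a unit vector $\bm n$ with $\langle\bm n,\tilde{\bm w}_j^t\rangle\geq 0$ for every positive $j$. I would fix the extremal positive weight $\w_{j^\star}$ maximizing $\langle\bm n,\tilde{\bm w}_{j^\star}^t\rangle$ and examine the thin slab of $\x\in V_i$ with $\langle\tilde{\bm w}_{j^\star}^t,\x\rangle\in(0,c)$ for a constant $c=\Theta(1/(vR))$. On this slab $\w_{j^\star}$ barely activates and, by extremality of $j^\star$, no other positive weight can over-activate, so $f_i(\W^t;\x)<1\leq f_r(\x)+1$ and the slab is contained in $\Omega_{i,r}^{j^\star,t}$. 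A standard spherical-cap volume estimate in $d_i$ dimensions combined with the shell $\{m_i\leq|\x|\leq M_i\}$ and the density lower bound $p_{\min}$ then yields the dimensional lower bound: the factor $\sqrt{d_i}$ arises from the normalization of the $\mathcal{S}^{d_i-1}$ surface area, and $(M_iR)^{-d_i}$ from the $1/R$-scale thickness of the cap across $d_i$ transverse degrees of freedom.

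The truly technical step is making the spherical-cap/shell intersection quantitative with the correct dimension scaling — in particular verifying the nontrivial claim that other positive weights do not over-activate on the chosen cap (which would raise $f_i$ above the margin and invalidate the inclusion in $\Omega_{i,r}^{j^\star,t}$). The extremality of $j^\star$ should provide control because any other $\tilde{\bm w}_k$ has a smaller component along $\bm n$; but pinning down a uniform constant requires a careful hemisphere-geometry argument that I expect to be the most delicate piece.
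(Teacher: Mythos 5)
Your overall skeleton is exactly the paper's: use $\sum_{j:v_{i,j}>0}|\w_j^t|$ as a monotone potential bounded by $R$, extract the per-step increment $\geq \eta v\,\mathbb{P}[\Omega_{i,r}^{j}]^2/C_p$ from the computation in Lemma \ref{decent}, and telescope to get $|T_1|\leq C_pR/(v\eta p_R^2)$. That half is fine. The gap is in the only genuinely hard step, the uniform lower bound $p_R=\Omega(p_{\min}/(\sqrt{d_i}(M_iR)^{d_i}))$, and the construction you sketch for it does not work as stated. Your region is the slab $\{\x:\langle\tilde{\bm w}_{j^\star}^t,\x\rangle\in(0,c)\}$ with $j^\star$ maximizing $\langle\bm n,\tilde{\bm w}_j^t\rangle$. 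Extremality of $j^\star$ in that sense gives you no control over the other positive neurons on this slab: in $\mathbb{R}^2$ take $\bm n=(0,1)$, $\tilde{\bm w}_{j^\star}=(0,1)$ and another positive weight $\tilde{\bm w}_k=(1,0)$ (which satisfies $\langle\bm n,\tilde{\bm w}_k\rangle=0\geq0$, so GC indeed fails); on most of the slab $\{0<x_2<c\}$ intersected with the shell $m_i\leq|\x|\leq M_i$ the activation $\langle\tilde{\bm w}_k,\x\rangle$ is of order $M_i$, so $f_i-f_r$ exceeds the margin and the inclusion in $\Omega_{i,r}$ fails. Relatedly, the slab's measure is of order $c\,M_i^{d_i-1}$, which would give a $p_R$ polynomially \emph{larger} than the claimed $(M_iR)^{-d_i}$ scaling --- a sign that the region is too big and must be cut down to a genuine spherical cap.

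What the paper does instead: it takes $\bm v$ to be the axis of the smallest spherical cap containing all positive $\tilde{\bm w}_j$ (so $\langle\bm v,\tilde{\bm w}_j\rangle\geq\sin\alpha$ for all $j$, with equality for some boundary weight $\tilde{\bm w}_1$ --- the \emph{minimizer} of $\langle\bm v,\cdot\rangle$, not a maximizer), and then considers the small angular cap $D^1$ of radius $\beta$ around the direction $\bm n$ lying in the $\mathrm{span}(\bm v,\tilde{\bm w}_1)$ plane with $\langle\bm n,\tilde{\bm w}_1\rangle=0$ and $\langle\bm n,\bm v\rangle=-\cos\alpha\leq 0$. For $\tilde{\x}$ in that cap the angle to \emph{every} positive weight is at least $\pi/2-\beta$, hence $\sigma(\langle\w_j^t,\tilde{\x}\rangle)\leq|\w_j^t|\sin\beta$ for all $j$ simultaneously, and choosing $\sin\beta=1/(2vM_iR)$ forces $f_i-f_r<1$ on all of $D^1$; the extra constraint $\langle\tilde{\bm w}_1,\tilde{\x}\rangle>0$ keeps half of the cap while guaranteeing that neuron $1$ is activated there, so $D^1\subset\Omega_{i,r}\cap\Omega_{\w_1}$. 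The cap volume $\sim(\sin\beta)^{d_i-1}\,|\mathcal{S}^{d_i-2}|/|\mathcal{S}^{d_i-1}|$ then produces exactly the $(M_iR)^{-d_i}$ and $\sqrt{d_i}$ factors. So the fix is not a refinement of your slab argument but a different choice of both the extremal neuron and the test region; as written, your proposal leaves the decisive estimate unproven and the specific route you propose for it fails.
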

\begin{proof}[Proof of Proposition \ref{phase_1}]
W.l.o.g, assume $i=1$ and $v_{1,j}>0$ if and only if $j\in[k_1]$. Let $t\in T_1$, by Lemma \ref{geo}, we know, for any fixed $t\in T_1$, there exists a $\alpha\in[0,\frac{\pi}{2}]$ and a unit vector $\bm v\in\mathbb{R}^{d_1}$, such that $\langle\bm v,\tilde{\bm w}_j^{t}\rangle\geq\sin\alpha$ for all $j\in[k_1]$ and $\langle\bm v,\tilde{\bm w}_1^{t}\rangle=\sin\alpha$. 
W.l.o.g, we assume $\bm v=(1,0,\cdots,0)$ and $\inner{\bm v}{\tilde{\w}_1^t}=\sin\alpha$. For any non-zero $\bm x\in V_1$, we write $\tilde{\bm x}=\frac{\bm x}{\left|\bm x\right|}\in \mathcal{S}^{d_1-1}$. 

Note that $\left|\W\right|$ is bounded by $R$, we know there exist $\beta\in(0,\frac{\pi}{2}-\alpha)$ such that $\sum_{j=1}^k\left|\bm w_j^t\right|\leq R=\frac{1}{2vM_1\sin\beta}$. 
Now, w.l.o.g, we can assume $\tilde{\bm w}_1^{t}=(\cos\alpha,\sin\alpha,0,\cdots,0)$. Take $\bm n=(-\cos\alpha,\sin\alpha,0,\cdots,0)$, we know $\langle\bm n,\tilde{\bm w}_1^{t}\rangle=0$. For all $\bm x\in V_1$ such that $\langle\bm n,\tilde{\bm x}\rangle>\cos\beta$, we have
$$
\cos\beta
<\langle\bm n,\tilde{\bm x}\rangle=-\tilde{x}_1\cos\alpha+\tilde{x}_2\sin\alpha
\leq -\tilde{x}_1\cos\alpha+\sqrt{1-\tilde{x}_1^2}\sin\alpha .
$$
So, we have $$\tilde{x}_1<-\cos(\alpha+\beta).$$

Now, for all $\bm x\in V_1$ such that $\tilde{x}_1<-\cos\left(\alpha+\beta\right)$, we have 
$$
f_1(\W^{t};\bm x)-f_r(\W^{t};\bm x)\leq 2vM_1\sum_{j=1}^k\sigma\left(\left\langle\bm w_j^{t},\tilde{\bm x}\right\rangle\right)
\leq2v M_1\sum_{j=1}^k|\bm w_j^{t}|\sin\beta<1.
$$ 
So, with
$$D^1:=\left\{\bm x\in V_1:\langle\bm n,\tilde{\bm x}\rangle>\cos\beta\text{ and } \langle\tilde{\bm w}_1,\tilde{\bm x}\rangle>0\right\},$$
we have for any $r>1$
$$D^1\subset\Omega_{1,r}.$$
See Fig \ref{helper} for a intuition of $D^1$. 
\begin{figure}[htp]
    \centering
    \begin{tikzpicture}
	[
	scale=2.5,
	>=stealth,
	point/.style = {draw, circle,  fill = black, inner sep = 1pt},
	dot/.style   = {draw, circle,  fill = black, inner sep = .2pt},
	]
	\def\rad{1.25}
	\node (x1) at +(90:\rad) [label = {below right:$x_1$}] {};
	\node (x2) at +(0:\rad) [label = {below right:$x_2$}] {};
	\def\rad{1.24}
	\node (x11) at +(90:\rad) [label = {}] {};
	\node (x22) at +(0:\rad) [label = {}] {};
	\draw[->] (x11) -- (x1);
	\draw[->] (x22) -- (x2);	
	\def\rad{0.906308}
	\node (T) at +(-45:\rad) [label = {}] {};
	\def\rad{2.3662}
	\node (origin) at (0,0) [point, label = {below left:$\bm 0$}]{};
	\node (w1) at +(45:\rad) [point, label = {below right:$R\tilde{\bm w}_1$}] {};
	\draw[->][line width=0.5mm] (origin) -- (w1);
	\def\rad{1}
	\draw [line width=0.5mm](origin) circle (\rad);
	
	\node (n2) at +(45:\rad) [point, label = {right:$\tilde{\bm w}_1$}] {};
	
	\draw[->][line width=0.5mm] (origin) -- (n2);
	\node (ax) at +(0:\rad) [label = {}] {};
	\draw[dashed][line width=0.5mm] ($ (origin) ! 1.5 ! (ax) $) --  ($ (ax) ! 2.5 ! (origin) $);
	\node (ay) at +(90:\rad) [point,label = {above left:$\bm v$}] {};
	\draw[->][line width=0.5mm] (origin) -- (ay);
	\draw[dashed][line width=0.5mm] ($ (origin) ! 1.5 ! (ay) $) --  ($ (ay) ! 2.5 ! (origin) $);
	\node (A) at +(-20:\rad) [label = {}] {};
	\draw[dotted][line width=0.5mm]  ($ (origin) ! 1.5 ! (A) $) --  ($ (A) ! 2.5 ! (origin) $);
	\node (B) at +(-45:\rad) [point,label = {below:$\bm n$}] {};
	\draw[->][line width=0.5mm] (origin) -- (B);
	\node (C) at +(-160:\rad) [label = {}] {};
	\node (D) at +(-135:\rad) [label = {}] {};
	\draw[dotted][line width=0.5mm]  ($ (origin) ! 1.5 ! (B) $) --  ($ (B) ! 2.5 ! (origin) $);
	\draw[dotted][line width=0.3mm]  ($ (A) ! 1.2 ! (C) $) --  ($ (C) ! 1.2 ! (A) $);
	\draw[dotted][line width=0.3mm]  ($ (B) ! 1.4 ! (D) $) --  ($ (D) ! 1.4 ! (B) $);
	\draw[line width=0.5mm] (w1) -- (A);
	\node (E) at (intersection of origin--B and A--C)[label = {}] {};
	\draw[dotted][line width=0.5mm] ($(A) ! 2.4 ! (T)$) -- ($(T) ! 1.6 ! (A)$);
	\draw [line width=0.5mm](0,0) -- (0:.2cm) arc (0:45:.2cm);
	\draw[line width=0.5mm](25:0.3cm) node {$\alpha$};
	\draw[line width=0.5mm] (0,0) -- (-20:.2cm) arc (-20:-45:.2cm);
	\draw[line width=0.5mm](-30:0.3cm) node {$\beta$};
	\draw[line width=0.5mm] (0,0) -- (-45:.2cm) arc (-45:-90:.2cm);
	\draw[line width=0.5mm](-70:0.3cm) node {$\alpha$};
	\begin{scope}[shift={(45:2.3662)}]
		\draw[line width=0.5mm] (0,0) -- (-110:.2cm) arc (-110:-135:.2cm);
		\draw[line width=0.5mm] (55:-0.3cm) node {$\beta$};
	\end{scope}
	\node (p1) at (0.7,-0.6) [label = {}] {};
	\node (p2) at (1.1,-0.58) [label = {}] {};
	\draw[->][line width=0.5mm] (p1) -- (p2);
	\node (pt) at (1.2,-0.72) [label = {$D^1$}] {};
	\end{tikzpicture}
    \caption{2-dim section of $\mathbb{R}^d$ spanned by $\tilde{\bm w}_1$ and $\bm n$}
    \label{helper}
\end{figure}
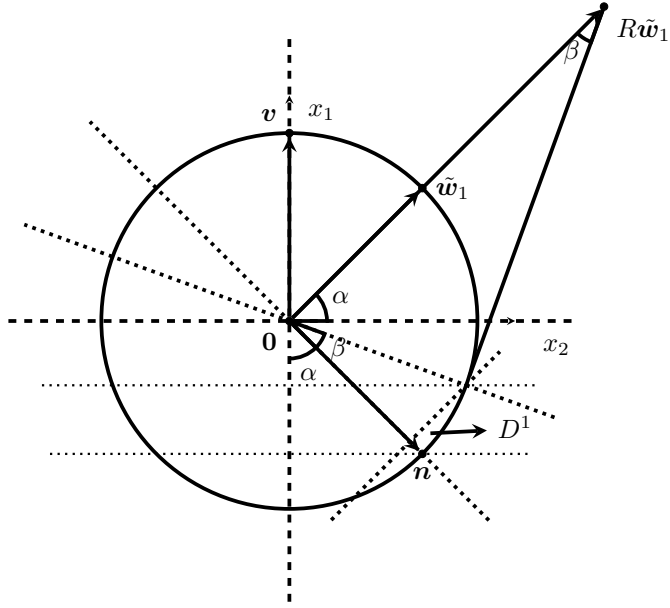
Note that $\bm n$ and $\tilde{\bm w}_1$ perpendicular to each other, and the probability distribution $p_1$ is bounded from below, so there exists a constant $p_R$, such that
$$\mathbb{P}\left[\Omega_{1,r}\cap\Omega_{\w_j^t}\right]\geq\mathbb{P}\left[D^1\right]=p_R>0,$$
and we have the following estimate for $p_R$
$$
\begin{aligned}
p_R&=\int_{D^1}\left\langle\tilde{\bm w}_1^{t},\nabla_{\bm w_1}l(\W)\right\rangle p_1\left(\bm x\right)\;d\,\bm x\\
&\geq\frac{p_{\text{min}}\left|\mathcal{S}^{d_1-2}\right|}{\left|\mathcal{S}^{d_1-1}\right|}\int_{\cos\beta}^1\left(1-y^2\right)^{\frac{d_1-3}{2}}dy\\
&=\frac{p_{\text{min}}\left|\mathcal{S}^{d_1-2}\right|}{\left|\mathcal{S}^{d_1-1}\right|}\int_0^\beta\left(\sin\theta\right)^{d_1-2}d\theta\\
&=\Omega\left(\frac{p_{\text{min}}\left(\sin\beta\right)^{d_1-1}}{(d_1-1)}\frac{\left|\mathcal{S}^{d_1-2}\right|}{\left|\mathcal{S}^{d_1-1}\right|}\right)\\
&=\Omega\left(\frac{p_{\text{min}}}{\sqrt{d_1}\left(M_1R\right)^{d_1}}\right).
\end{aligned}
.$$

Now, we know the gradient of $\bm w_1$ on $\tilde{\bm w}_1$ direction is bounded below, with same arguments in proof of Lemma \ref{decent}, we have
$$
\left|\left\langle\tilde{\bm w}_1^{t},\nabla_{\bm w_1}l_1\left(\W^{t}\right)\right\rangle\right|\geq\frac{v\,p_R^2}{C_p}.
$$
Note that 
$$\sum_{j=1}^{k_1}\left|\bm w_j^{t+1}\right|-\left|\bm w_j^{t}\right|\geq\left|\bm w_1^{t+1}\right|-\left|\bm w_1^{t}\right|\geq\eta \left|\left\langle\tilde{\bm w}_1^{t},\nabla_{\bm w_1}l\left(\W^{t}\right)\right\rangle\right|\geq\frac{v\,\eta\,p_R^2}{C_p}$$
and since $\left|\bm w_j^{t}\right|$ is non-decreasing for $j\in[k_1]$, we have
$$\sum_{t\in T_1}\sum_{j=1}^{k_1}\left|\bm w_j^{t+1}\right|-\left|\bm w_j^{t}\right|\leq R.$$
Combining the above two equations, it follows $\left|T_1\right|\leq\cfrac{C_pR}{v\,\eta\, p_R^2}$.
\end{proof}
At the beginning of the second phase, $\tilde{w}_j$ are already evenly distributed. That is, the convex hull of $\tilde{\bm w}_j$ contains the origin, and any input data must at least activate some of the neurons. As long as an input data contributes to the loss, it also contributes to the gradient. So learning process becomes faster during the second phase. The following proposition shows the loss decays  faster than before. 

\begin{proposition}\label{phase_2}
Let $b_j=0$ in (\ref{net2}). Let $T_2$ be the set of $t$ such that $\{\bm w_j^t:v_{i,j}>0\}$ satisfies the geometric condition in Lemma \ref{geo}, and that  $\left|\W^t \right|$ is upper-bounded by $R$ at all $t$. Then:
$$\sum_{t\in T_2} l_i\left(\W^{t}\right)^2\leq4\eta^{-1}vn^2C_pR^2M_i^2R.$$
\end{proposition}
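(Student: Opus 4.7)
My plan is to dominate $l_i(\W^t)^2$ at each step $t\in T_2$ by a constant multiple of the telescoping increment $\sum_{j:v_{i,j}>0}\bigl(|\w_j^{t+1}|-|\w_j^t|\bigr)$, and then use monotonicity and uniform boundedness to sum. The telescoping side is immediate from Lemma \ref{decent}: $|\w_j^t|$ is non-decreasing for every $j$ with $v_{i,j}>0$, while the standing bound $|\W^t|\le R$ forces $\sum_j|\w_j^t|\le R$; hence $\sum_{t\ge 0}\sum_{j:v_{i,j}>0}\bigl(|\w_j^{t+1}|-|\w_j^t|\bigr)\le R$, and this persists after restriction to $t\in T_2$.

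The per-step domination requires three ingredients. First, an integrand bound: whenever $|\W|\le R$ and $|\x|\le M_i$, one has $|f_i|,|f_r|\le vRM_i$, so on $\Omega_{i,r}$ the quantity $1-f_i+f_r$ is bounded by $1+2vRM_i$, giving
$$l_i(\W)\le (1+2vRM_i)\sum_{r\neq i}\P{}{\Omega_{i,r}}.$$
Second, a covering inequality from GC: for $t\in T_2$, statement (1) of Lemma \ref{geo} shows that every nonzero $\x\in V_i$ satisfies $\langle\w_j,\x\rangle>0$ for at least one $j$ with $v_{i,j}>0$, so up to a $p_i$-null set $\Omega_{i,r}\subseteq\bigcup_{j:v_{i,j}>0}\Omega_{\w_j}$, yielding $\P{}{\Omega_{i,r}}\le\sum_{j:v_{i,j}>0}\P{}{\Omega_{i,r}^j}$. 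Third, the gradient estimate already contained in the proof of Lemma \ref{decent}: for each $j$ with $v_{i,j}>0$,
$$|\w_j^{t+1}|-|\w_j^t|\ge\eta\bigl\langle\tilde{\w}_j^t,\,-\nabla_{\w_j}l_i(\W^t)\bigr\rangle\ge\frac{v\eta}{C_p}\sum_{r\neq i}\P{}{\Omega_{i,r}^j}^2.$$

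Stitching the three ingredients together via two applications of Cauchy-Schwarz (first over the $n-1$ values of $r$, then over the at most $n$ indices $j$ with $v_{i,j}>0$) and absorbing $1+2vRM_i$ into $4vRM_i$ in the relevant regime yields
$$l_i(\W^t)^2 \le 4vn^2 C_p R^2 M_i^2\,\eta^{-1}\sum_{j:v_{i,j}>0}\bigl(|\w_j^{t+1}|-|\w_j^t|\bigr),$$
after which the telescoping bound closes the proof. The main technical obstacle is the covering step: it uses the equivalent form (1) of GC (rather than the convex-hull form) to ensure the activation is by a \emph{positive-sign} neuron specifically, and it requires the density bound $p_{\min}>0$ so that the measure-zero exceptional set on which $\x$ is orthogonal to every $\w_j$ with $v_{i,j}>0$ contributes nothing to $\P{}{\Omega_{i,r}}$.
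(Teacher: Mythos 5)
Your proposal follows essentially the same route as the paper's proof: bound $l_i(\W^t)$ by a constant times $\sum_{r\neq i}\P{}{\Omega_{i,r}}$, use the geometric condition to cover $\Omega_{i,r}$ by the activation sets of the positive-sign neurons so that $\P{}{\Omega_{i,r}}\leq\sum_{j:v_{i,j}>0}\P{}{\Omega_{i,r}^j}$, lower-bound the radial growth $|\w_j^{t+1}|-|\w_j^t|$ by $\frac{v\eta}{C_p}\sum_{r\neq i}\P{}{\Omega_{i,r}^j}^2$ via the argument of Lemma \ref{decent}, combine with Cauchy--Schwarz over $r$ and $j$, and telescope against the uniform bound $R$. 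The only difference is cosmetic: you retain the sharper integrand bound $1+2vRM_i$ and absorb it afterwards, whereas the paper drops the $1$ outright.
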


\begin{proof}[Proof of Proposition \ref{phase_2}]
First, we estimate $l_i(\W^t)$ as follows
$$
\begin{aligned}
&l_i(\W^t)=\sum_{r\not=i}\E{(\x,y)\sim\mathcal{D}_i}{\sigma\left(1-f_i+f_r\right)}\\
\leq&\sum_{r\not=i}\left(2vRM_i\right)\P{(\x,y)\sim\mathcal{D}_i}{f_i<1+f_r}\\
=&\left(2vRM_i\right)\sum_{r\not=i}\P{(\x,y)\sim\mathcal{D}_i}{\Omega_{i,r}}.
\end{aligned}
$$
Now, we have
$$\sum_{r\not=i}\P{(\x,y)\sim\mathcal{D}_i}{\Omega_{i,r}}\geq\frac{l_i(\W^t)}{2vRM_i}.$$

Second, we estimate the gradient. Let $\Omega_{i,r}^j=\Omega_{i,r}\cap\Omega_{\w_j}$ and assume $v_{i,j}>0$, we have
$$\nabla_{\w_j}l_i(\W^t)=\sum_{r\not=i}2v\E{(\x,y)\sim\mathcal{D}_i}{\mathds{1}_{\Omega_{i,r}^j}(\x)\x}.$$
By the same arguments in proof of Lemma \ref{decent}, we know
$$\inner{\nabla_{\bm w_j}l_i(\W^t)}{\tilde{\w}_j^t}\geq2v\frac{\P{}{\Omega_{i,r}^j}^2}{2C_p}=\frac{v\P{}{\Omega_{i,r}^j}^2}{C_p}.$$

Next, we utilize the geometric condition which implies
$$\sum_{v_{i,j}>0}\P{}{\Omega_{i,r}^j}\geq\P{}{\Omega_{i,r}}$$
and get
$$
\begin{aligned}
&\sum_{v_{i,j}>0}\norm{\w_j^{t+1}}-\norm{\w_j^t}\geq\sum_{\substack{v_{i,j}>0\\r\not=i}}\frac{v\eta\P{}{\Omega_{i,r}^j}^2}{C_p}=\frac{v\eta}{C_p}\sum_{\substack{v_{i,j}>0\\r\not=i}}\P{}{\Omega_{i,r}^j}^2\\
&\geq\frac{v\eta}{C_p}\frac{1}{k_i}\sum_{r\not=i}\P{}{\Omega_{i,r}}^2\geq\frac{v\eta}{C_p}\frac{1}{k_i(n-1)}\left(\sum_{r\not=i}\P{}{\Omega_{i,r}}\right)^2\\
&\geq\frac{v\eta}{n^2C_p}\frac{l_i(\W^t)^2}{4v^2R^2M_i^2}=\frac{\eta}{4vn^2C_pR^2M_i^2}l_i(\W^t)^2.
\end{aligned}
$$
where $k_i$ is the number of $j$'s such that $V_{i,j}>0$.

Finaly, we combine the inequalities above and the assumption $\norm{\W^t}<R$ and get
$$\sum_{t\in T_2}l_i(\W^t)^2\leq\frac{4vn^2C_pR^2M_i^2}{\eta}\sum_{\substack{v_{i,j}>0\\t\in T_2}}\norm{\w_j^{t+1}}-\norm{\w_j^t}\leq4\eta^{-1}vn^2C_pR^2M_i^2R.$$
\end{proof}

From the above proposition, we see that in order to get $l_i\left(\W^{t}\right)<\epsilon$ for some $t\in T_2$, we only need $$|T_2|\geq\frac{4vn^2C_pR^2M_i^2R}{\eta\epsilon}.
$$
Compared with Proposition \ref{phase_1}, we see that $|T_2|$ does not rely on the dimension $d_i$, whereas the upper bound of $|T_1|$ is exponential in $d_i$.

\section{Experiments}
In this section, we report the results of our experiments on both synthetic and MNIST data. The experiments on synthetic data aim to show that {\it convergence to global minimum continues to hold if data subspaces form acute angles, going beyond the theoretical orthogonality assumption} under which convergence is observed to be the fastest.  Our theoretical results and the geometric conditions are supported by  simulations. Experiments on MNIST dataset exhibit subspace structures in data flow and  slow-to-fast training dynamics on LeNet-5. These phenomena from our model are worth further study in deep networks. 
\subsection{Synthetic Data}
Let $\left\{\bm e_j\right\}_{j\in[4]}$ be orthonormal basis of $\mathbb{R}^4$, $\theta$ be an acute angle and $\bm v_1=\bm e_1$, $\bm v_2=\sin\theta\,\bm e_2+\cos\theta\,\bm e_3$, $\bm v_3=\bm e_3$, $\bm v_4=\bm e_4$. Now, we have two linearly independent subspaces of $\mathbb{R}^4$ namely $V_1=\text{Span}\left(\left\{\bm v_1,\bm v_2\right\}\right)$ and $V_2=\text{Span}\left(\left\{\bm v_3,\bm v_4\right\}\right)$. We can easily calculate that the angle between $V_1$ and $V_2$ is $\theta$.
Next, we define
$$\hat{\mathcal{X}}_1=\left\{r\left(\cos\varphi\,\bm v_1+\sin\varphi\,\bm v_2\right):r\in S_r,\varphi\in S_\varphi\right\},$$
$$\hat{\mathcal{X}}_2=\left\{r\left(\cos\varphi\,\bm v_3+\sin\varphi\,\bm v_4\right):r\in S_r,\varphi\in S_\varphi\right\},$$
where
$$S_r=\left\{\frac{20}{j}:j\in[20]-[9]\right\}, \;
S_\varphi=\left\{\frac{j\pi}{40}:j\in[80]\right\}.$$
Let $\mathcal{X}_1$ corresponds to label $y=1$ and $\mathcal{X}_2$ corresponds to label $y=-1$. 
Since we are considering a binary classification problem,
the neural network structure can be simplified as  (\ref{net3}):
\begin{equation}\label{net3}
\tilde{f}\left(\W;\x\right)=\sum_{j=1}^{k}\sigma\left(h_j\right)-\sum_{j=k+1}^{2k}\sigma\left(h_j\right),
\end{equation}
and the prediction is given by $\hat{y}(\x)=\text{sign}\tilde{f\left(\W;\x\right)}$.
Now, the population loss becomes
$$l_i(\W):=\frac{1}{|\hat{\mathcal{X}_i}|}\sum_{\x\in\hat{\mathcal{X}_i}}\max\set{0,1+(-1)^{i}\tilde{f}\left(\W;\x\right)}.$$

In our first simulation, we set $k=4$ in (\ref{net3}) and run gradient descent (\ref{gd}) on $l_1+l_2$ with learning rate $\eta=0.1$. Fig. \ref{angle} shows the iterations it takes to converge to global minima given $\theta$ and a Gaussian noise added to $\mathcal{X}_i$'s. From this simulation, we see that the orthogonal data assumptions are only  technically needed and our convergence result holds in more general settings.

\begin{figure}[htp]
	\centering
	\includegraphics[width=0.49\linewidth]{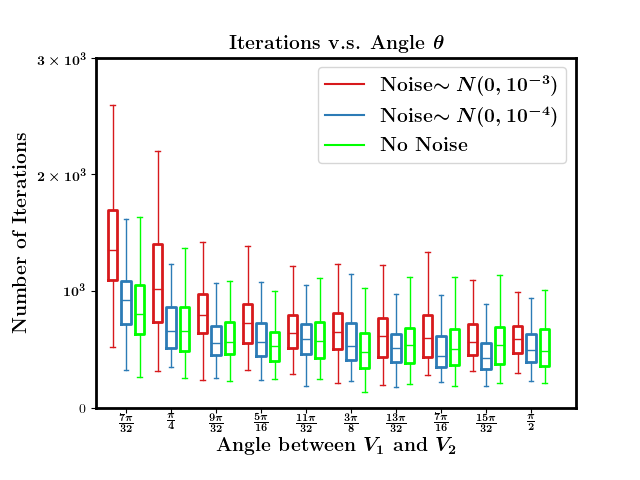}
	\caption{Number of iterations to convergence v.s. $\theta$, the anlge between subspaces $V_1$ and $V_2$.}
	\label{angle}
\end{figure}

In our second and remaining simulations of this subsection, we take $\theta=\frac{\pi}{2}$ so that $V_1$ and $V_2$ are orthogonal. Lemma \ref{decompose} suggests the learning process of $l_1$ and $l_2$ are independent, so we only simulate the training process of $l_1$ and assume $\w_j\in V_1\cong\mathbb{R}^2$.
We take entries of $\W^{0}$ to be i.i.d. standard normal i.e. $\bm w_j^{0}\sim N(\bm 0,I_2)$. We train the network (\ref{net3}) with gradient descent (\ref{gd}) in all our simulations, where learning rate $\eta=0.1$.
The left plot of Fig. \ref{iters} shows how many iterations algorithm (\ref{gd}) takes in searching for a global minima from the random initialization mentioned above. 
For each box, the red mark indicates the median, and the bottom and top of the box indicate the 25th and 75th percentiles, respectively. 
As we can see from the graph, as number of hidden neurons ($2k$) becomes larger, the algorithm (\ref{gd}) tends to need less iterations in searching for a global minima. 
\begin{figure}[htp]
	\centering
	\begin{tabular}{cc}
	\includegraphics[width=0.49\linewidth]{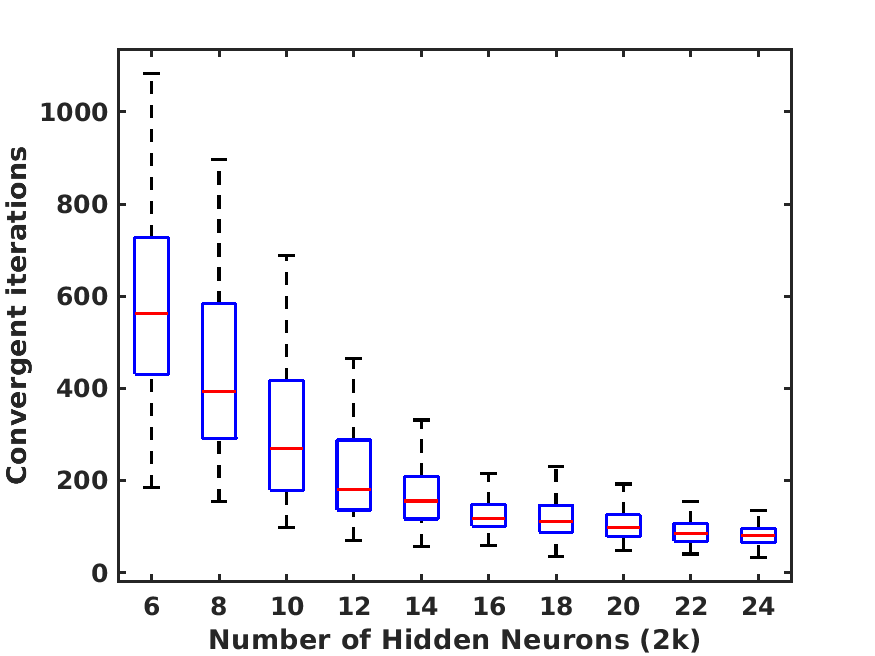}&
	\includegraphics[width=0.49\linewidth]{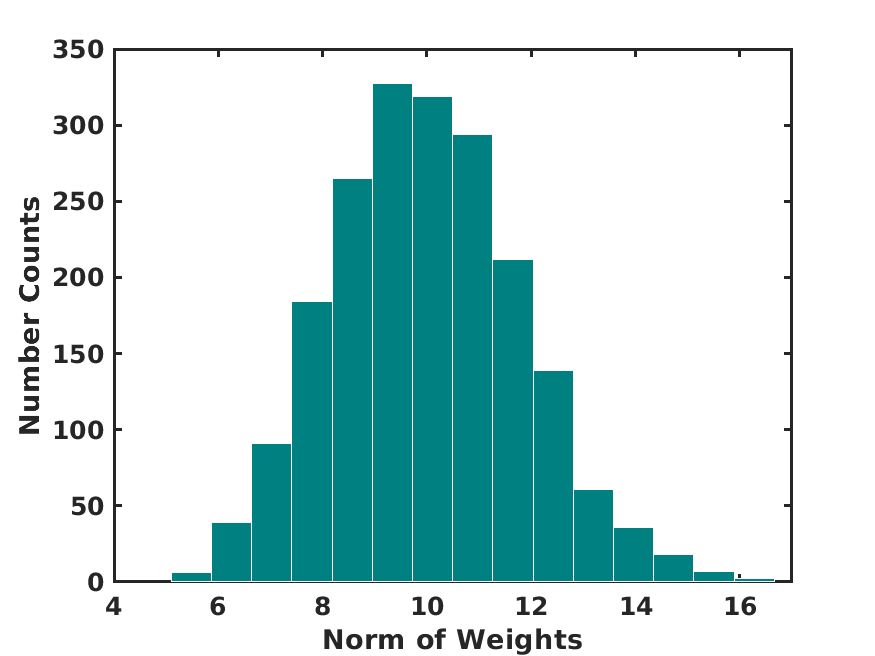}
	\end{tabular}
	\caption{Left: convergent iterations vs. number of neurons ($d=2$). Right: histogram of norm of weights: $\max\limits_{t}\left|\W^{t}\right|$ ($d=2$ and $k=4$).}
	\label{iters}
\end{figure}

In the third simulation, we compare the convergence speed with and without the geometric condition being satisfied.  
We introduce two initialization method: random initialization and half space initialization i.e. with $\hat w_{j,i}\sim N(0,1)$, random initialization takes $w_{j,i}^{0}=\hat w_{j,i}$ whereas half space initialization takes $w_{j,1}^{0}=|\hat w_{j,1}|$ and  $w_{j,2}^{0}=\hat w_{j,2}$. 
We run the algorithm for 100 times with different numbers of hidden neurons using initialization methods, and report the means and standard variances of the number of iterations in Table \ref{init}. We see from Remark \ref{prob} how the ${\rm P}_{gc}$ increases when the number of hidden neurons grows. However, the half space initialization never satisfies the geometric condition, as all the weights lie in the same half space. 
A widely believed explanation on why a neural network can fit all training labels is that the neural network is over-parameterized. Our work explained one of the reasons why over-parameterization helps convergence: it helps the weights to spread more 'evenly' and quickly after initialization. Table \ref{init} shows that when we randomly initialize, the iterations for convergence in gradient descent (\ref{gd}) come down a lot as the number of hidden neurons increases; much less so in half space initialization. 
\begin{table}[ht]
  \caption{Iterations taken ($\text{mean}\pm\text{std}$) to convergence with random and half space initializations.}
  \label{init}
  \centering
  \begin{tabular}{ccc}
    \toprule
    \# of Neurons ($2k$)  & Random Init.   & Half Space Init. \\
    \midrule
    6   & 578.90$\pm$205.43 & 672.41$\pm$226.53\\
    8   & 423.96$\pm$190.91 & 582.16$\pm$200.81\\
    10  & 313.29$\pm$178.67 & 550.19$\pm$180.59\\
    12  & 242.72$\pm$178.94 & 517.26$\pm$172.46\\
    14  & 183.53$\pm$108.60 & 500.42$\pm$215.87\\
    16  & 141.00$\pm$80.66 & 487.42$\pm$220.48\\
    18  & 126.52$\pm$62.07 & 478.25$\pm$202.71\\
    20  & 102.09$\pm$32.32 & 412.46$\pm$195.92\\
    22  & 90.65$\pm$28.01 & 454.08$\pm$203.00\\
    24  & 82.93$\pm$26.76 & 416.82$\pm$216.58\\
    \bottomrule
  \end{tabular}
\end{table}

Our fourth simulation take specifically $2k=8$. With 2000 runs we did a histogram of the maximum norm of $\W$ during the training process shown in the right plot of Fig. \ref{iters}. In fact, our third simulation suggests our boundedness assumption on $\W$ in Theorem \ref{main1} and Theorem \ref{main2} are reasonable.

\begin{figure}[htp]
  \centering
  \begin{tabular}{cccc}
    \includegraphics[width=0.245\linewidth]{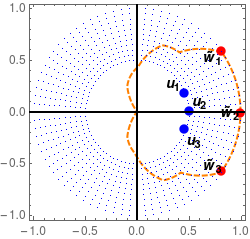}
    \includegraphics[width=0.245\linewidth]{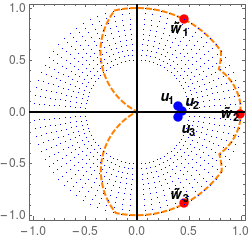}
    \includegraphics[width=0.245\linewidth]{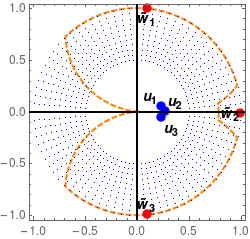}
    \includegraphics[width=0.245\linewidth]{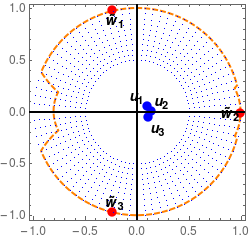}
  \end{tabular}
  \caption{Dynamics of weights: $\tilde{\bm w}_j$ and $\bm u_j$}
  \label{dynamic}
\end{figure}
In our last simulation, we take $k=3$ so that there are in total 6 hidden neurons. For notation simplicity, we denote $\bm u_j=\w_{j+3}$.
For $j\in[3]$, we plot $\tilde{\bm w}_j$'s and $\bm u_j$'s in Fig. \ref{dynamic}, where we plot $\tilde{\bm w}_j$'s instead of $\bm w_j$'s since some of $|\bm w_j|$'s are greater than one. Before algorithm (\ref{gd}) starts, the parameters in neural network (\ref{net2}) are initialized to be $$\bm w_j^{0}=\bm u_{j}^{0}=\frac{3}{4}\left(\cos\frac{\left(2-j\right)\pi}{6},\sin\frac{\left(2-j\right)\pi}{6}\right)$$ for $j\in\{1,2,3\}$. 
In Fig. \ref{dynamic}, the tiny blue points are input data under Kelvin transformation: $\x\rightarrow\x^*=\frac{\x}{|\x|^2}$. 
Take $\bm x=\frac{1}{r}\left(\cos\theta,\sin\theta\right)$ so that under Kelvin transformation $\x^*=r\left(\cos\theta,\sin\theta\right)$. For convenience, we let $\tilde{\x}=r\x=\left(\cos\theta,\sin\theta\right)$. 
The orange dashed curve has expression in polar coordinates: $$\rho(\theta) = \min\left\{1,\sigma\left(\tilde{f}\left(\W;\tilde{\x}\right)\right)\right\}.$$
Note that we are taking Hinge loss $l(\W;\{\x,1\})=0$ if and only if $\tilde{f}(\W;\x)\geq1$, i.e. $$\tilde{f}\left(\W;\tilde{\x}\right)=r\tilde{f}\left(\W;\x\right)\geq r=|\x^*|.$$
Here, in our data set $\hat{\mathcal{X}}$, all data point have norm less than one under Kelvin transformation, so $l(\W;\{\x,1\})=0$ if and only if
$\rho(\theta)\geq \left|\x^*\right|$.
This means, the blue points when surrounded by the orange dashed curve provide zero loss. In particular, when $\rho(\theta)=1$, the population loss is 0. 

\subsection{MNIST Experiments}
\begin{figure}[htp]
	\centering
	\begin{tabular}{cc}
	\includegraphics[width=0.48\linewidth]{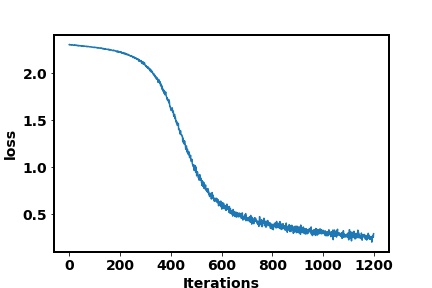}&
	\includegraphics[width=0.48\linewidth]{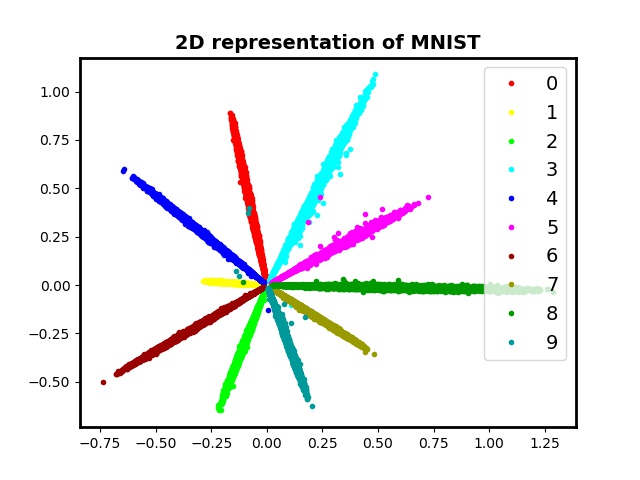}
	\end{tabular}
	\caption{{\bf Left}: Slow-to-Fast transition during  LeNet \cite{lenetversion} training on MNIST dataset. {\bf Right}: 2D projections of MNIST features from a trained convolutional neural network \cite{cosface}. The 10 classes are color coded, the feature points cluster near linearly independent subspaces. }
 	\label{MNIST}
\end{figure}
The two-phase dynamics we proved in our model does appear in deep network training on real (non-synthetic) data sets. In experiments 
on MNIST, we used a simplified version of LeNet-5 \cite{lenetversion} with 2 convolutional layers and two fully-connected (fc) layers; see \cite{Lenet-5} for the full version where F6 and output layers correspond to our fc layers. The simplified Lenet-5 is trained via stochastic gradient descent (SGD) at constant learning rate  $0.01$, batch size $1000$ and without momentum or regularization. We show in Fig. \ref{MNIST} (left) the loss value vs. iterations during training. At the early stage, the loss decays slowly, then the fast phase sets in 
after 400 iterations. 
Fig. \ref{MNIST} (left) clearly supports our theory on the slow and fast dynamics of gradient descent. 
\medskip

Network visualization helps understand its geometric properties.
In Fig. \ref{MNIST} (right), we plot 2D projections of feature vectors at input to fc layer extracted by a neural network \cite{cosface} consisting of 6  convolutional layers followed by an fc layer. Projected features from different classes cluster around linearly independent subspaces. The plot suggests that in trained deep networks, the linearly independent subspace assumption  approximately holds for the input to the fully connected layer before classification output. Similar subspace structure of high level  feature vectors 
on CIFAR-10 and enlargement of subspace angles to improve classification accuracy have been studied in \cite{LFT_2018}.     
\medskip

In Fig. \ref{gc_mnist_3d_fc1}, we show four  projections onto unit sphere $\mathcal{S}^2$ (inside randomly selected 3D subspaces) of the weight vectors of the first and second fc layers of LeNet \cite{lenetversion}. 
Visual inspection on these and others (not shown)  suggests that our geometric condition holds with high probability.

\begin{figure}[htp]
  \centering
  \begin{tabular}{cccc}
    \includegraphics[width=0.245\linewidth]{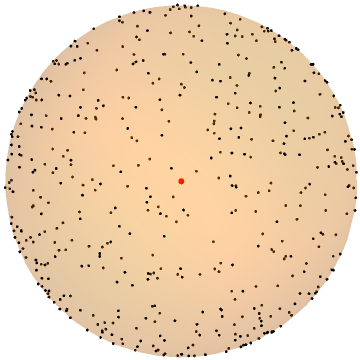}
    \includegraphics[width=0.245\linewidth]{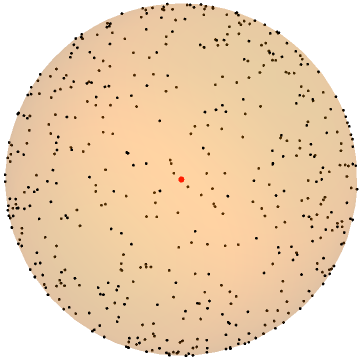}
    \includegraphics[width=0.245\linewidth]{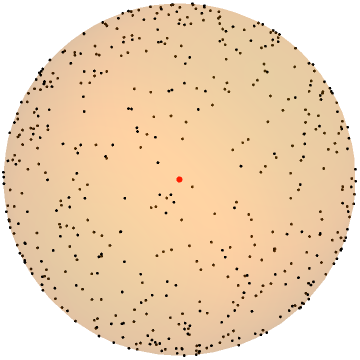}
    \includegraphics[width=0.245\linewidth]{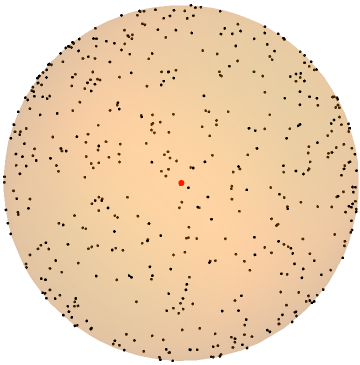}\\
    \includegraphics[width=0.245\linewidth]{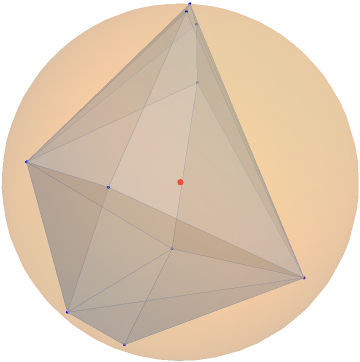}
    \includegraphics[width=0.245\linewidth]{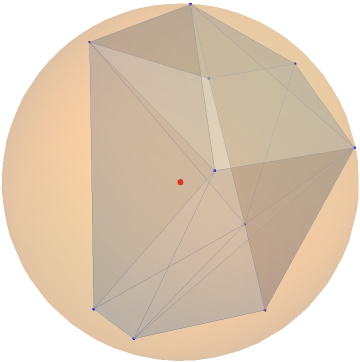}
    \includegraphics[width=0.245\linewidth]{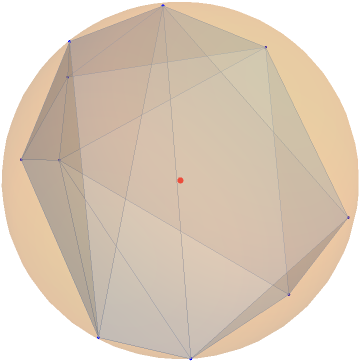}
    \includegraphics[width=0.245\linewidth]{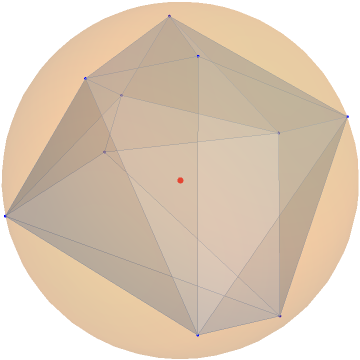}
  \end{tabular}
  \caption{{\bf Top row}: Projections onto $\mathcal{S}^2$ (inside  randomly selected 3D subspaces) of weight vectors in the first fully connected layer of a trained LeNet. {\bf Bottom row}: Projections onto $\mathcal{S}^2$ (inside  randomly selected 3D subspaces) of weight vectors and their convex hull in the second fully connected layer of a trained LeNet.}
  \label{gc_mnist_3d_fc1}
\end{figure}

\section{Conclusions and Future Work}
The slow and fast dynamics of neural network weights under gradient descent is critical for understanding the learning process. 
We performed the first theoretical study on training neural networks to classify linearly inseparable data sets away from the over-parametrized regime. 
We discovered a two time-scale phenomenon of network weights during gradient descent training: a slow phase where the weights spread out to satisfy a geometric condition, and a subsequent fast phase where the weights converge to a global minimum. Both the two scale dynamics and 
geometric conditions are supported by LeNet-5 training on MNIST data. 
\medskip

A future direction is to provide a concrete relation between the number of weights and the rate of convergence, and quantify the effect of over-parameterization on the rate of convergence. 
Another direction 
is to devise efficient method to verify the geometric condition computationally for weight vectors in training deep neural networks on general linearly non-separable data sets.

\section{Acknowledgement}
This work was partially supported by NSF grants IIS-1632935, DMS-1854434, DMS-1924548, and DMS-1924935.

\bibliographystyle{AIMS}
\bibliography{reference}

\medskip
Received xxxx 20xx; revised xxxx 20xx.
\medskip

\end{document}